\newcommand{\R}{\mathbb{R}} 
\newcommand{\C}{\mathbb{C}} 
\newcommand{\U}{\mathbb{U}} 
\newcommand{\Sam}{\mathbb{S}}
\newcommand{\fs}{f^\star} 
\newcommand{\xs}{x^\star} 
\newcommand{\cC}{{\cal C}}
\newcommand{\cD}{{\cal D}}
\newcommand{\cO}{{\cal O}}
\newcommand{\lp}{\left(}
\newcommand{\rp}{\right)}
\newcommand{\mA}{{\bf A}}
\newcommand{\mP}{{\bf P}}
\newcommand{\eqdef}{\coloneqq}
\newcommand{\dotprod}[1]{\left< #1\right>} 
\newcommand{\norm}[1]{\left\lVert #1 \right\rVert}      
\DeclareMathOperator{\Prob}{Prob}
\newcommand{\Exp}[1]{{{\rm E}}\left[#1\right] }    
\newcommand{\E}[1]{{\rm E}\left[#1\right] }
\newcommand{\Esimple}{{\rm E}}
\newcommand{\EE}[2]{{\rm E}_{#1}\left[#2\right] }
\providecommand{\trace}[1]{{\rm Trace}\left( #1\right)}
\newcommand{\Diag}[1]{\mathbf{Diag}\left( #1\right)}
\theoremstyle{plain}
\newtheorem{theorem}{Theorem}  
\newtheorem{lemma}[theorem]{Lemma} 
\theoremstyle{definition}
\newtheorem{assumption}{Assumption} 
\newtheorem{definition}{Definition}
\theoremstyle{remark}
\title{A Better Alternative to Error Feedback for Communication-Efficient Distributed Learning}
\author{Samuel Horv\'{a}th and Peter Richt\'{a}rik \\
KAUST\\
Thuwal, Saudi Arabia \\
\texttt{\{samuel.horvath, peter.richtarik\}@kaust.edu.sa} \\
}
\begin{document}

\maketitle

\begin{abstract}
Modern large-scale machine learning applications require stochastic optimization algorithms to be implemented on distributed compute systems. A key bottleneck of such systems is the communication overhead for exchanging information (e.g., stochastic gradients) across the workers. Among the many techniques proposed to remedy this issue, one of the most successful is the framework of compressed communication with error feedback (EF). EF remains the only known technique that can deal with the error induced by contractive compressors which are not unbiased, such as Top-$K$ or PowerSGD.  In this paper, we propose a new and theoretically and practically better alternative to EF for dealing with contractive compressors. In particular, we propose a construction which can transform any contractive compressor into an induced unbiased compressor. Following this transformation, existing methods able to work with unbiased compressors can be applied. We show that our approach leads to vast improvements over EF, including reduced memory requirements, better communication complexity guarantees and fewer assumptions. We further extend our results to federated learning with partial participation following an arbitrary distribution over the nodes, and demonstrate the benefits thereof. We perform several numerical experiments which validate our theoretical findings.
\end{abstract}

\section{Introduction}

We consider distributed optimization problems of the form
\begin{align} 
\textstyle
\min \limits_{x \in \R^d}  f(x) \eqdef \frac{1}{n} \sum \limits_{i=1}^n f_i(x)  \,, \label{eq:probR}
\end{align}

where  $x\in \R^d$ represents the weights of a statistical model we wish to train, $n$ is the number of nodes, and $f_i \colon \R^d \to \R$ is a smooth differentiable loss function composed of data stored on worker $i$.  In a classical distributed machine learning scenario, $f_i(x) \eqdef \EE{\zeta \sim \cD_i}{f_\zeta(x)}$ is the expected loss of model $x$ with respect to the local data distribution $\cD_i$ of the form, and $f_{\zeta} \colon \R^d \to \R$ is the loss on the single data point $\zeta$. This definition allows for different distributions $\cD_1, \dots, \cD_n$ on each node, which means that the functions $f_1,\dots, f_n$ can have different minimizers. This framework covers
Stochastic Optimization when either $n=1$ or all $\cD_i$ are identical,
Empirical Risk Minimization (ERM), when $f_i(x)$ can be expressed as a finite average, i.e, $f_i(x) = \frac{1}{m_i} \sum_{i=1}^{m_i} f_{ij}(x)$ for some $f_{ij}:\R^d\to \R$,
and Federated Learning (FL)~\citep{kairouz2019advances}  where each node represents a client.

\textbf{Communication Bottleneck.} In distributed training, model updates (or gradient vectors) have to be exchanged in each iteration.  Due to the size of the communicated messages for commonly considered deep models~\citep{alistarh2016qsgd}, this represents significant bottleneck of the whole optimization procedure. To reduce the amount of data that has to be transmitted, several strategies were proposed.  

One of the most popular strategies is to incorporate local steps and  communicated updates every few iterations only~\citep{stich2018local, lin2018don, stich2019error, karimireddy2019scaffold, khaled2020tighter}. Unfortunately, despite their practical success, local methods are poorly understood and their theoretical foundations are currently lacking. Almost all existing error guarantees are dominated by a simple baseline, minibatch SGD~\citep{woodworth2020local}.  

In this work, we focus on another popular approach: {\em gradient compression}. In this approach, instead of transmitting the full dimensional (gradient) vector $g \in \R^d$, one transmits a compressed vector $\cC(g)$, where $\cC: \R^d \rightarrow \R^d$ is a (possibly random) operator chosen such that $\cC(g)$ can be represented using fewer bits, for instance by using limited bit representation (quantization) or by enforcing sparsity. A particularly popular class of quantization operators is based on random dithering~\citep{goodall1951television, roberts1962picture}; see  \citep{alistarh2016qsgd, wen2017terngrad, zhang2017zipml, horvath2019natural, ramezani2019nuqsgd}. Much sparser vectors can be obtained by random sparsification techniques that randomly mask the input vectors and only preserve a constant number of coordinates~\citep{wangni2018gradient, konevcny2018randomized,stich2018sparsified,mishchenko201999, vogels2019powersgd}. There is also a line of work~\citep{horvath2019natural,basu2019qsparse} in which a combination of sparsification and quantization was proposed to obtain a more aggressive  effect. We will not further distinguish between sparsification and quantization approaches, and refer to all of them as compression operators hereafter. 

Considering both practice and theory, compression operators can be split into two groups: biased and unbiased.  For the unbiased compressors, $\cC(g)$ is required to be an unbiased estimator of the update $g$. Once this requirement is lifted, extra tricks are necessary for Distributed Compressed Stochastic Gradient Descent (DCSGD) \citep{alistarh2016qsgd,
alistarh2018convergence, khirirat2018distributed} employing such a compressor to work, even if the full gradient is computed by each node. Indeed, the naive approach can lead to exponential divergence~\citep{beznosikov2020biased}, and Error Feedback (EF)~\citep{seide20141, karimireddy2019error} is the only known mechanism able to remedy the situation.

\textbf{Contributions.} Our contributions can be summarized as follows:

$\bullet$ {\bf Induced Compressor.}  When used within the stabilizing EF framework,  biased compressors (e.g., Top-$K$) can often achieve superior performance when compared to their unbiased counterparts (e.g., Rand-$K$). This is often attributed to their low variance. However, despite ample research in this area,  EF remains the only known mechanism that allows the use of these powerful biased compressors. Our key contribution is the development of a simple but remarkably effective alternative---and this is the only alternative we know of---which we argue leads to better and more versatile methods both in theory and practice. In particular, we propose a  general construction that can transform any biased compressor, such as Top-$K$,  into an unbiased one for which we coin the name {\em induced compressor} (Section~\ref{sec:construction}). Instead of using the desired biased compressor within EF, our proposal is to  instead use the induced compressor within an appropriately chosen existing method designed for  unbiased compressors, such as distributed compressed SGD (DCSGD) \citep{khirirat2018distributed}, variance reduced DCSGD (DIANA) \citep{mishchenko2019distributed} or accelerated DIANA (ADIANA)~\citep{li2020acceleration}. While EF can bee seen as a version of DCSGD which can work with biased compressors, variance reduced nor accelerated variants of EF were not known at the time of writing this paper.

$\bullet$ {\bf Better Theory for DCSGD.} As a secondary contribution, we provide a new and tighter theoretical analysis of DCSGD under weaker assumptions. If $f$ is $\mu$-quasi convex (not necessarily convex) and local functions $f_i$ are $(L, \sigma^2)$-smooth (weaker version of $L$-smoothness with strong growth condition),  we obtain the rate $
 \cO \left(\delta_n L r^0 \exp \left[-\frac{\mu T}{4 \delta_n L} \right] + \frac{(\delta_n-1) D + \delta \nicefrac{\sigma^2}{n}}{\mu T}  \right),
 $ where $\delta_n = 1 + \frac{\delta-1}{n}$ and $\delta\geq 1$ is the parameter which bounds the second moment of the compression operator, and $T$ is the number of iterations. This rate has linearly decreasing dependence on the number of nodes $n$, which is strictly better than the best-known rate for DCSGD with EF, whose convergence does not improve as the number of nodes increases, which is one of   the main disadvantages of using EF. Moreover, EF requires extra assumptions. In addition, while the best-known rates for EF~\citep{karimireddy2019error,beznosikov2020biased} are expressed in terms of functional values, our theory guarantees convergence in both iterates and functional values.  Another practical implication of our findings is the reduction of the memory requirements by half; this is because in DCSGD one does not need to store the error vector.

$\bullet$ {\bf Partial Participation.} We further extend our results to obtain the first convergence guarantee for partial participation with arbitrary distributions over nodes,  which plays a key role in Federated Learning (FL).

$\bullet$ {\bf Experimental Validation.} Finally, we provide an experimental evaluation on an array of classification tasks with CIFAR10 dataset corroborating our theoretical findings.

\section{Error Feedback is not a Good Idea when Using Unbiased Compressors}
\label{sec:1_d_unbiased_better}

In this section we first introduce the notions of unbiased and general compression operators, and then compare Distributed Compressed SGD (DCSGD) without (Algorithm~\ref{alg:UC_SGD}) and with (Algorithm~\ref{alg:EF_SGD}) Error Feedback.

\textbf{Unbiased vs General Compression Operators.}
We start with the definition of unbiased and general compression operators  \citep{cordonnier2018convex, stich2018sparsified, koloskova2019decentralized}.

\begin{definition}[Unbiased Compression Operator]
\label{def:omegaquant} A randomized mapping $\cC\colon \R^d \to \R^d$  is an {\em unbiased compression operator (unbiased compressor)}  if there exists $\delta \geq 1$ such that
\begin{equation}
\label{eq:omega_quant}
 \E{\cC(x)}=x, \qquad \Esimple \norm{\cC(x)}^2 \leq \delta \norm{x}^2, \qquad \forall x \in \R^d.
 \end{equation}
If this holds, we will for simplicity write $\cC\in \U(\delta)$.
\end{definition}

\begin{definition}[General Compression Operator]
\label{def:deltaquant} A (possibly) randomized mapping $\cC\colon \R^d \to \R^d$  is a {\em general compression operator (general compressor)} if there exists  $\lambda>0$ and $\delta \geq 1$ such that
\begin{equation}
\label{eq:quant} 
\textstyle
\E{\norm{\lambda\cC(x) - x}^2} \leq \lp 1 - \frac{1}{\delta}\rp \norm{x}^2, \qquad \forall x \in \R^d.
 \end{equation}
If this holds, we will for simplicity write $\cC\in \C(\delta)$.
\end{definition}

The following  lemma provides a link between these notions (see, e.g. \citet{beznosikov2020biased}).

\begin{lemma}
\label{lem:subset}
If $\cC \in \U(\delta)$, then \eqref{eq:quant} holds with  $\lambda = \frac{1}{\delta}$, i.e.,  $\cC \in \C(\delta)$. That is, $\U(\delta)\subset \C(\delta)$. 
\end{lemma}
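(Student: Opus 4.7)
The plan is to expand the squared norm $\norm{\tfrac{1}{\delta}\cC(x) - x}^2$ using the bilinearity of the inner product, then take expectation and apply the two defining properties of an unbiased compressor from Definition~\ref{def:omegaquant}. The cross term involving $\E{\ip{\cC(x)}{x}}$ simplifies immediately by pulling $x$ out (it is deterministic) and using $\E{\cC(x)} = x$, while the $\E{\norm{\cC(x)}^2}$ term is controlled by the second-moment bound $\delta\norm{x}^2$.

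Concretely, first I would write
\begin{equation*}
\E{\norm{\tfrac{1}{\delta}\cC(x) - x}^2} = \tfrac{1}{\delta^2}\E{\norm{\cC(x)}^2} - \tfrac{2}{\delta}\E{\ip{\cC(x)}{x}} + \norm{x}^2.
\end{equation*}
Then, using $\E{\ip{\cC(x)}{x}} = \ip{\E{\cC(x)}}{x} = \norm{x}^2$ (from unbiasedness) for the middle term, and $\E{\norm{\cC(x)}^2} \leq \delta \norm{x}^2$ for the first term, the whole expression is bounded by $\tfrac{1}{\delta}\norm{x}^2 - \tfrac{2}{\delta}\norm{x}^2 + \norm{x}^2 = (1 - \tfrac{1}{\delta})\norm{x}^2$, which is exactly \eqref{eq:quant} with $\lambda = \tfrac{1}{\delta}$.

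There is essentially no obstacle here: the proof is a one-line algebraic manipulation followed by two direct substitutions from Definition~\ref{def:omegaquant}. The only mild subtlety worth flagging is that the upper bound in \eqref{eq:omega_quant} already forces $\delta \geq 1$ (since $\norm{x}^2 = \norm{\E{\cC(x)}}^2 \leq \E{\norm{\cC(x)}^2} \leq \delta \norm{x}^2$ by Jensen), which is consistent with the requirement $\delta \geq 1$ in Definition~\ref{def:deltaquant} and ensures the right-hand side $(1 - \tfrac{1}{\delta})\norm{x}^2$ is nonnegative.
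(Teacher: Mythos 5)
Your proof is correct and is essentially identical to the paper's: expand the square, use unbiasedness to evaluate the cross term as $\norm{x}^2$, and bound the second moment by $\delta\norm{x}^2$, yielding $\lp\frac{1}{\delta}-\frac{2}{\delta}+1\rp\norm{x}^2 = \lp 1-\frac{1}{\delta}\rp\norm{x}^2$. The additional remark that Jensen's inequality forces $\delta\geq 1$ is a nice sanity check not present in the paper but changes nothing.
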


Note that the opposite inclusion to that established in the above lemma does not hold. For instance, the Top-$K$ operator belongs to $\C(\delta)$, but does not belong to $\U(\delta)$.  In the next section we develop a procedure for transforming any mapping $\cC:\R^d\to \R^d$ (and in particular,  any general compressor) into a closely related  {\em induced} unbiased compressor.

\begin{figure}[!t]
\centering
\begin{minipage}[t]{0.49\textwidth}
\begin{algorithm}[H]
\begin{algorithmic}[1]
 \STATE {\bfseries Input:} $\{\eta^k\}_{k=0}^{T} > 0$, $x_0$
 \FOR{$k=0,1,\dots T$}
	\STATE {\bfseries Parallel: Worker side}
  	\FOR{$i=1,\dots,n$ }
		\STATE obtain $g_i^k$
		\STATE  send $\Delta_i^k = \cC^k(g_i^k)$ to master\;
		\STATE [no need to keep track of errors]
 	\ENDFOR
 	\STATE {\bfseries Master side}
 	\STATE aggregate $ \Delta^k = \frac{1}{n} \sum_{i=1}^n \Delta_i^k$
 	\STATE broadcast $ \Delta^k$ to each worker
 	\STATE {\bfseries Parallel: Worker side}
  	\FOR{$i=1,\dots,n$ }
		\STATE $x^{k+1} = x^k - \eta^{k} \Delta^k$\;
 	\ENDFOR
 \ENDFOR
\end{algorithmic}
\caption{DCSGD}
\label{alg:UC_SGD}
\end{algorithm}
\end{minipage}
\hfill
\begin{minipage}[t]{0.49\textwidth}
\begin{algorithm}[H]
\begin{algorithmic}[1]
 \STATE {\bfseries Input:} $\{\eta^k\}_{k=0}^{T} > 0$, $x_0$, $e_i^0 = 0\ \forall i \in [n]$
 \FOR{$k=0,1,\dots T$}
	\STATE {\bfseries Parallel: Worker side}
  	\FOR{$i=1,\dots,n$ }
		\STATE obtain $g_i^k$
		\STATE  send $\Delta_i^k = \cC^k(\eta^k g_i^k + e_i^k)$ to master\;
		\STATE  $e_i^{k+1} = \eta^k g_i^k + e_i^k -  \Delta_i^k$\;
 	\ENDFOR
 	\STATE {\bfseries Master side}
 	\STATE aggregate $ \Delta^k = \frac{1}{n} \sum_{i=1}^n \Delta_i^k$
 	\STATE broadcast $ \Delta^k$ to each worker
 	\STATE {\bfseries Parallel: Worker side}
  	\FOR{$i=1,\dots,n$ }
		\STATE $x^{k+1} = x^k -  \Delta^k$\;
 	\ENDFOR
 \ENDFOR
\end{algorithmic}
\caption{DCSGD with Error Feedback}
\label{alg:EF_SGD}
\end{algorithm}
\end{minipage}
\end{figure}

\textbf{Distributed SGD with vs without Error Feedback.}
In the rest of this section, we compare the convergence rates for DCSGD  (Algorithm~\ref{alg:UC_SGD}) and DCSGD with EF (Algorithm~\ref{alg:EF_SGD}).  We do this comparison under standard assumptions~\citep{karimi2016linear, bottou2018optimization, necoara2019linear, gower2019sgd, stich2019unified, stich2019error}, listed next. 

First, we assume throughout that $f$ has a unique minimizer $x^\star$, and let $ \fs = f(\xs) > -\infty$.

\begin{assumption}[$\mu$-quasi convexity]
\label{ass:1}
$f$ is $\mu$-quasi convex, i.e.,
\begin{equation}
\label{eq:quasi_convex}
\textstyle
\fs \geq f(x) + \dotprod{\nabla f(x), \xs - x} + \frac{\mu}{2}\norm{\xs - x}^2, \qquad \forall x \in \R^d.
\end{equation}
\end{assumption}

\begin{assumption}[unbiased gradient oracle]
\label{ass:2} The stochastic gradient used in Algorithms~\ref{alg:UC_SGD} and \ref{alg:EF_SGD} satisfies
\begin{equation}
\label{eq:grad_unbiased}
\textstyle
\E{g_i^k \;|\; x^k} = \nabla f_i(x^k), \qquad \forall i,k.
\end{equation}
\end{assumption}

Note that this assumption implies $\E{\frac{1}{n}\sum_{i=1}^n g_i^k \;| \;x^k} = \nabla f(x^k)$.

\begin{assumption}[$(L,\sigma^2)$-expected smoothness]
\label{ass:3}
Function $f$ is $(L,\sigma^2)$-smooth if there exist constants $L>0$  and $\sigma^2 \geq 0$ such that $\forall i \in [n]$  and $\forall x^k \in \R^d$ 
\begin{equation}
\label{eq:L_smooth_f_i}
\textstyle
\E{\norm{g^k_i}^2} \leq 2L (f_i(x^k)-f_i^\star) + \sigma^2,
\end{equation}
\begin{equation}
\label{eq:L_smooth_f}
\textstyle
\E{\norm{\frac1n \sum_{i=1}^n g^k_i}^2} \leq 2L (f(x^k)-f^\star) + \nicefrac{\sigma^2}{n},
\end{equation}
where $f_i^\star$ is the minimum functional value of $f_i$ and $[n] = \{1,2, \dots, n\}$.
\end{assumption}
This assumption generalizes standard smoothness and boundedness of variance assumptions.   
For more details and discussion, see the works of \citet{gower2019sgd, stich2019unified}. Equipped with these assumptions, we are ready to proceed with the convergence theory.  

\begin{theorem}[Convergence of DCSGD]
\label{thm:u_n}
Consider the DCSGD algorithm  with $n\geq 1$ nodes. Let Assumptions~\ref{ass:1}--\ref{ass:3} hold and $\cC \in \U(\delta)$, where $\delta_n =  \frac{\delta - 1}{n} + 1 $. Let $D \eqdef \frac{2L}{n}\sum_{i=1}^n (f_i(\xs) - f _i^\star)$. Then there exist stepsizes $\eta^k  \leq \frac{1}{2\delta_n L}$ and weights $w^k \geq 0$ such that for all $T\geq 1$ we have
\begin{equation*}
\label{eq:conv_u_n}
\textstyle
 \E{f(\bar{x}^T) - \fs} + \mu \E{\norm{x^T - \xs}^2}\leq  64 \delta_n L r^0 \exp \left[-\frac{\mu T}{4 \delta_n L} \right] + 36 \frac{(\delta_n -1) D +  \nicefrac{\delta\sigma^2}{n}}{\mu T} \,,
\end{equation*}
where $r^0 = \norm{x^0 - \xs}^2$, $W^T = \sum_{k=0}^T w^k$, and $\Prob(\bar{x}^T = x^k) = \nicefrac{w^k}{W^T}$.
\end{theorem}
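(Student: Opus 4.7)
The plan is to follow the classical SGD contraction analysis applied to the distance $r^k \eqdef \norm{x^k-\xs}^2$, with the twist that the compression variance is controlled via independence across workers.

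First I would verify that $\Delta^k \eqdef \tfrac{1}{n}\sum_i \cC^k(g_i^k)$ is conditionally unbiased: using Assumption~\ref{ass:2} together with the unbiasedness of $\cC^k \in \U(\delta)$, the tower property yields $\E{\Delta^k \mid x^k} = \nabla f(x^k)$. The crucial step is then to bound $\E{\norm{\Delta^k}^2 \mid x^k}$. Assuming (as is standard) that the compressors applied at distinct workers are independent conditional on the stochastic gradients, conditioning first on $\{g_i^k\}$ gives
\begin{equation*}
\textstyle \E{\norm{\Delta^k}^2 \mid \{g_i^k\}, x^k} = \norm{\tfrac{1}{n}\sum_i g_i^k}^2 + \tfrac{1}{n^2}\sum_i \E{\norm{\cC^k(g_i^k) - g_i^k}^2 \mid g_i^k} \leq \norm{\tfrac{1}{n}\sum_i g_i^k}^2 + \tfrac{\delta-1}{n^2}\sum_i \norm{g_i^k}^2,
\end{equation*}
since $\cC \in \U(\delta)$ implies $\E{\norm{\cC(x)-x}^2}\leq (\delta-1)\norm{x}^2$. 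Taking expectation over $g_i^k$ and applying \eqref{eq:L_smooth_f_i}--\eqref{eq:L_smooth_f} together with the identity $\tfrac{1}{n}\sum_i f_i^\star = f^\star - \tfrac{D}{2L}$ would then collapse into a clean bound of the form
\begin{equation*}
\textstyle \E{\norm{\Delta^k}^2 \mid x^k} \leq 2L\delta_n \bigl(f(x^k)-\fs\bigr) + (\delta_n-1)D + \tfrac{\delta \sigma^2}{n}, \qquad \delta_n = \tfrac{\delta-1}{n}+1.
\end{equation*}
This is where the linear speedup in $n$ is secured, and it is really the only non-mechanical calculation in the argument.

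Next I would expand the squared distance in the usual way: $\E{r^{k+1}\mid x^k} = r^k - 2\eta^k \ip{\nabla f(x^k)}{x^k-\xs} + (\eta^k)^2 \E{\norm{\Delta^k}^2 \mid x^k}$, then invoke the quasi-convexity \eqref{eq:quasi_convex} to pick up a $-\mu \eta^k r^k$ term and a $-2\eta^k(f(x^k)-\fs)$ term. Plugging in the second-moment bound and using the stepsize restriction $\eta^k \leq \tfrac{1}{2\delta_n L}$ absorbs half of the $(f(x^k)-\fs)$ term, leaving the canonical recursion
\begin{equation*}
\textstyle \E{r^{k+1}\mid x^k} \leq (1-\mu\eta^k) r^k - \eta^k \bigl(f(x^k)-\fs\bigr) + (\eta^k)^2 \left[(\delta_n-1)D + \tfrac{\delta \sigma^2}{n}\right].
\end{equation*}

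Finally, I would hand this recursion to a standard lemma (e.g.\ the unified SGD lemma of \citet{stich2019unified} or the variant used in \citet{stich2019error}), which produces a convex-combination iterate $\bar x^T$ with weights $w^k$ and a stepsize schedule $\eta^k = \min\{\tfrac{1}{2\delta_n L}, \tfrac{2\log(\ldots)}{\mu T}\}$ delivering exactly the claimed $\Ocal(\delta_n L r^0 \exp[-\mu T/(4\delta_n L)] + ((\delta_n-1)D + \delta\sigma^2/n)/(\mu T))$ bound, with the $\mu\E{\norm{x^T-\xs}^2}$ term coming from the residual geometric contraction of the distance. The genuine obstacle is the second-moment calculation above: a naive bound using $\E{\norm{\cC(g)}^2}\leq \delta \E{\norm{g}^2}$ before averaging would give $\delta$ in place of $\delta_n$ and would destroy the $n$-dependence that is the whole point of the theorem; the independence-plus-variance-decomposition argument is what recovers the $1/n$ reduction of the compression noise.
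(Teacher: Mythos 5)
Your proposal is correct and follows essentially the same route as the paper: the same conditional variance decomposition across workers (which the paper uses implicitly and you state explicitly as conditional independence of the compressors), the same application of \eqref{eq:L_smooth_f_i}--\eqref{eq:L_smooth_f} to arrive at the recursion $\E{r^{k+1}}\leq(1-\mu\eta^k)r^k-\eta^k s^k+(\eta^k)^2\bigl[(\delta_n-1)D+\nicefrac{\delta\sigma^2}{n}\bigr]$, and the same handoff to the unified SGD recursion lemma of \citet{stich2019unified}, which the paper reproves in its appendix with the explicit stepsize/weight schedule yielding the constants $64$ and $36$. You correctly identify the second-moment calculation as the step that secures the $\delta_n$ (rather than $\delta$) dependence.
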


If $\delta = 1$ (no compression), Theorem~\ref{thm:u_n} recovers the optimal rate of Distributed SGD~\citep{stich2019unified}. If $\delta > 1$, there is an extra term $(\delta_n - 1)D$ in the convergence rate, which appears due to heterogenity of data ($\sum_{i = 1}^n \nabla f_i (x^\star) = 0$, but $\sum_{i = 1}^n \cC(\nabla f_i (x^\star)) \neq 0$ in general). In addition, the rate is negatively affected by extra variance due to presence of compression which leads to $L \rightarrow \delta_n L$ and $\nicefrac{\sigma^2}{n} \rightarrow \nicefrac{\delta\sigma^2}{n}$.

Next we compare our rate to the best-known result for Error Feedback~\citep{stich2019error}~($n=1$), \citep{beznosikov2020biased}~($n \geq 1$) used with $\cC\in \U(\delta)\subset \C(\delta)$
$$
\textstyle
\E{f(\bar{x}^T) - \fs} = \tilde{\cO} \lp \delta L r^0 \exp \left[-\frac{\mu T}{ \delta L} \right] + \frac{\delta D +  \sigma^2}{\mu T} \rp
$$
One can note several disadvantages of Error Feedback (Alg.~\ref{alg:EF_SGD}) with respect to plain DCSGD (Alg.~\ref{alg:UC_SGD}). The first major drawback is that the effect of compression $\delta$ is not reduced with an increasing number of nodes. Another disadvantage is that Theorem~\ref{thm:u_n} implies convergence for both the functional values and the last iterate, rather than for functional values only as it is the case for EF. On top of that,  our rate of DCSGD as captured by Theorem~\ref{thm:u_n} does not contain any hidden polylogarithmic factor comparing to EF. Another practical supremacy of DCSGD is that there is no need to store an extra vector for the error, which reduces the storage costs by a factor of two, making Algorithm~\ref{alg:UC_SGD} a viable choice for Deep Learning models with millions of parameters. Finally, one does not need to assume standard $L$-smoothness in order to prove convergence in Theorem~\ref{thm:u_n}, while, one the other hand, $L$-smoothness is an important building block for proving convergence for general compressors due to the presence of bias~\citep{stich2019error, beznosikov2020biased}. The only term in which EF might outperform plain DCSGD is $\cO(\nicefrac{\sigma^2}{\mu T})$ for which the corresponding term is $\cO(\nicefrac{\delta\sigma^2}{n\mu T})$. This is due to the fact that EF compensates for the error, while standard compression introduces extra variance. Note that this is not major issue as it is reasonable to assume $\nicefrac{\delta}{n} =  \cO(1)$ or, in addition, $\sigma^2 = 0$ if weak growth condition holds~\citep{vaswani2019fast}, which is quite standard assumption, or one can remove effect of $\sigma^2$ by either computing full gradient locally or by incorporating variance reduction such as SVRG~\citep{johnson2013accelerating}.  In Section~\ref{sec:linear_covergence}, we also discuss the way how to remove the effect of $D$ in Theorem~\ref{thm:u_n}. Putting all together, this suggests that standard DCSGD (Algorithm~\ref{alg:UC_SGD}) is strongly preferable, in theory, to DCSGD with Error Feedback (Algorithm~\ref{alg:EF_SGD}) for $\cC \in \U(\delta)$.

\section{Induced Compressor: Fixing Bias with Error-Compression}
\label{sec:construction}

In the previous section, we showed that compressed DCSGD is theoretically preferable to DCSGD with Error Feedback for $\cC \in \U(\delta)$. Unfortunately,  $\C(\delta) \not \subset \U(\delta)$, an example being the Top-$K$ compressor~\citep{alistarh2018convergence, stich2018sparsified}. This compressors belongs to $\C(\frac{d}{K})$, but does not belong to $\U(\delta)$ for any $\delta$. On the other hand, multiple unbiased alternatives to Top-$K$ have been proposed in the literature, including gradient sparsification~\citep{wangni2018gradient} and adaptive  random sparsification~\citep{beznosikov2020biased}.

\textbf{Induced Compressor.}
We now propose a {\em general mechanism for constructing an unbiased compressor $\cC\in \U$ from  any biased compressor $\cC_1 \in \C$.} We shall argue that it is preferable to use this {\em induced compressor} within DCSGD, in both theory and practice, to using the original biased compressor $\cC_1$ within DCSGD + Error Feedback.

\begin{theorem}
\label{thm:biased_to_unbiased}
For $\cC_1 \in \C(\delta_1)$ with $\lambda = 1$, choose $\cC_2 \in \U(\delta_2)$ and define the induced compressor via $$\cC(x) \eqdef \cC_1(x) + \cC_2(x - \cC_1(x)).$$ The induced compression operator satisfies $\cC \in \U(\delta)$ with $\delta = \delta_2 \lp 1 - \nicefrac{1}{\delta_1}\rp + \nicefrac{1}{\delta_1}$.
\end{theorem}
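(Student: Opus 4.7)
The plan is to verify the two defining properties of an unbiased compressor in Definition~\ref{def:omegaquant} by conditioning on the randomness of $\cC_1$ and exploiting the fact that $\cC_2$ is an unbiased compressor applied to the residual $y \eqdef x - \cC_1(x)$. I will assume, as is standard in such constructions, that the random draws used to evaluate $\cC_1$ and $\cC_2$ are independent, so that conditional on $\cC_1(x)$ the operator $\cC_2$ still satisfies the unbiasedness and variance bounds of $\U(\delta_2)$.

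For the unbiasedness claim, writing $\E{\cC(x)} = \E{\cC_1(x)} + \E{\E{\cC_2(y) \,|\, \cC_1}}$ and using $\E{\cC_2(y) \,|\, \cC_1} = y = x - \cC_1(x)$ collapses the expression to $\E{\cC_1(x)} + \E{x - \cC_1(x)} = x$, without needing any assumption on the bias of $\cC_1$.

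For the second moment, the key trick is to rewrite the induced compressor as $\cC(x) = x + (\cC_2(y) - y)$, which makes the ``error'' of $\cC$ equal to the error of $\cC_2$ applied to the residual $y$. Expanding the squared norm gives
\begin{equation*}
\norm{\cC(x)}^2 = \norm{x}^2 + 2\ip{x}{\cC_2(y) - y} + \norm{\cC_2(y) - y}^2.
\end{equation*}
Taking expectations and conditioning on $\cC_1$, the cross term vanishes because $\E{\cC_2(y) - y \,|\, \cC_1} = 0$. Moreover, conditional on $y$, $\E{\norm{\cC_2(y) - y}^2 \,|\, \cC_1} = \E{\norm{\cC_2(y)}^2 \,|\, \cC_1} - \norm{y}^2 \leq (\delta_2 - 1)\norm{y}^2$, using $\cC_2 \in \U(\delta_2)$.

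To finish, I invoke the general-compressor bound for $\cC_1$ with $\lambda = 1$, namely $\E{\norm{x - \cC_1(x)}^2} \leq (1 - 1/\delta_1)\norm{x}^2$, to obtain
\begin{equation*}
\E{\norm{\cC(x)}^2} \leq \norm{x}^2 + (\delta_2 - 1)\lp 1 - \tfrac{1}{\delta_1} \rp \norm{x}^2 = \lp \delta_2 \lp 1 - \tfrac{1}{\delta_1} \rp + \tfrac{1}{\delta_1} \rp \norm{x}^2,
\end{equation*}
after simplification, which is exactly the claimed constant $\delta$. I do not anticipate any real obstacle — the only subtle point is correctly invoking independence of the two compressors so the tower property of conditional expectation can kill both the bias term in $\cC_2(y) - y$ and the cross term in the second-moment expansion.
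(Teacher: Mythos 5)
Your proof is correct and follows essentially the same route as the paper's: tower property over the randomness of $\cC_1$ for unbiasedness, the bound $\E{\norm{\cC_2(y)-y}^2\,|\,\cC_1}\leq(\delta_2-1)\norm{y}^2$ for the residual $y=x-\cC_1(x)$, and the contractive bound for $\cC_1$ with $\lambda=1$ to finish. The only cosmetic difference is that you expand $\E{\norm{\cC(x)}^2}$ directly while the paper bounds the variance $\E{\norm{\cC(x)-x}^2}$ and leaves the final addition of $\norm{x}^2$ implicit; you also make explicit the independence assumption on the two compressors' randomness, which the paper uses tacitly.
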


To get some intuition about this procedure,  recall the structure used in Error Feedback. The gradient estimator is first compressed with $\cC_1(g)$ and the error $ e = g - \cC_1(g)$ is stored in memory and used to modify the gradient in the next iteration. In our proposed approach, instead of storing the error $e$, we compress it with an unbiased compressor $\cC_2$ (which can be seen as a parameter allowing flexibility in the design of the induced compressor) and communicate {\em both} of these compressed vectors. Note that this procedure results in extra variance as we do not work with the exact error, but with its unbiased estimate only.  On the other hand, there is no bias and error accumulation that one needs to correct for. In addition, due to our construction, at least the same amount of information is sent to the master as in the case of plain $\cC_1(g)$: indeed, we send both $\cC_1(g)$ and $\cC_2(e)$. The drawback of this  is the necessity to send more bits. However, Theorem~\ref{thm:biased_to_unbiased} provides the freedom in generating the induced compressor through the choice of the unbiased compressor $\cC_2$. In theory, it makes sense to choose $\cC_2$ with similar compression factor to the compressor $\cC_1$ we are transforming as this way the total number of communicated bits per iteration is preserved, up to the factor of two. 

{\em Remark:} The $\mbox{rtop}_{k_1, k_2} (x,y)$ operator proposed by \citet{elibol2020variance} can be seen as a special case of our induced compressor with $x=y$, $\cC_1 = \mbox{Top-}k_1$ and $\cC_2 = \mbox{Rand-}k_2$.

\textbf{Benefits of Induced Compressor.}
In the light of the results in Section~\ref{sec:1_d_unbiased_better}, we argue that one should always prefer unbiased compressors to biased ones as long as their variances $\delta$ and communication complexities are the same, e.g., Rand-$K$ over Top-$K$. In practice, biased/greedy compressors are in some settings observed to perform better due to their lower empirical variance~\citep{beznosikov2020biased}.  These considerations give a practical significance to  Theorem~\ref{thm:biased_to_unbiased} as we demonstrate on the following example. Let us consider two compressors: one biased $\cC_1 \in \C(\delta_1)$ and one unbiased $\cC_2 \in \U(\delta_2)$, such that $\delta_1 = \delta_2 = \delta$, having identical communication complexity, e.g., Top-$K$ and Rand-$K$. The induced compressor $\cC(x) \eqdef \cC_1(x) + \cC_2(x - \cC_1(x))$ belongs to $\U(\delta_3)$, where $\delta_3 = \delta - \left(1 - \frac1\delta \right) < \delta.$ While the size of the transmitted message is doubled,  one can use Algorithm~\ref{alg:UC_SGD} since $\cC$ is unbiased,  which provides  better convergence guarantees than Algorithm~\ref{alg:EF_SGD}. Based on the construction of the induced compressor, one might expect that we need extra memory as ``the error'' $ e = g - \cC_1(g)$ needs to be stored, but during computation only.  This is not an issue as compressors for DNNs are always applied layer-wise~\citep{dutta2019discrepancy}, and hence the size of the extra memory is negligible. It does not help EF, as the error needs to be stored at any time for each layer.

\section{Extensions}

We now develop several extensions of Algorithm~\ref{alg:UC_SGD} relevant to distributed optimization in general, and to Federated Learning in particular. This is all possible due to the simplicity of our approach. Note that in the case of Error Feedback, these extensions have either not been obtained yet, or similarly to Section~\ref{sec:1_d_unbiased_better}, the results are worse when compared to our derived bounds for unbiased compressors.

\textbf{Partial Participation with Arbitrary Distribution over Nodes.}
In this section, we extend our results to a variant of DCSGD utilizing {\em partial participation}, which is of key relevance to Federated Learning. In this framework, only a subset of all nodes communicates to the master node in each communication round. Such framework was analyzed before, but only for the case of uniform subsampling~\citep{sattler2019robust, reisizadeh2020fedpaq}. In our work, we consider a more general partial participation framework: we assume that the subset of participating clients is determined by a fixed but otherwise arbitrary  random set-valued mapping $\Sam$ (a ``sampling'') with values in $2^{[n]}$, where $[n] = \{1,2, \dots,n\}$. To the best of our knowledge, this is the first partial participation result for FL where an arbitrary distribution over  the nodes is considered. On the other hand, this is not the first work which makes use of the arbitrary sampling paradigm; this was used before in other contexts, e.g.,  for obtaining importance sampling guarantees for coordinate descent~\citep{qu2015quartz}, primal-dual methods~\citep{chambolle2018stochastic}, and variance reduction~\citep{horvath2018nonconvex}. 

Note that the sampling $\Sam$ is uniquely defined by assigning probabilities to all $2^n$ subsets of $[n]$. With each sampling $\Sam$ we associate a {\em probability matrix} $\mP \in \R^{n\times n}$  defined by $\mP_{ij} \eqdef \Prob(\{i,j\}\subseteq \Sam)$. The {\em probability vector} associated with $\Sam$ is the vector composed of the diagonal entries of $\mP$: $p = (p_1,\dots,p_n)\in \R^n$, where $p_i\eqdef \Prob(i\in \Sam)$. We say that $\Sam$ is {\em proper} if $p_i>0$ for all $i$. It is easy to show that $b \eqdef \Exp{|\Sam|} = \trace{\mP} = \sum_{i=1}^n p_i$, and hence  $b$ can be seen as the expected number of clients  participating in each communication round. 

There are two algorithmic changes due to this extension: line $4$ of Algorithm~\ref{alg:UC_SGD} does not iterate over every node, only over nodes $i \in S^k$, where $S^k \sim \Sam$, and the aggregation step in line $9$ is adjusted to lead to an unbiased estimator of the gradient, which gives $\Delta_k = \sum_{i \in S^k} \frac{1}{np_i} \Delta_i^k$. 

To prove convergence, we exploit the following lemma.

\begin{lemma}[Lemma 1, \citet{horvath2018nonconvex}]
\label{lem:upperv}
Let $\zeta_1, \zeta_2,\dots,\zeta_n$ be vectors in $\R^d$ and let $\bar{\zeta}\eqdef\frac1n \sum^n_{i=1}\zeta_i$ be their average. Let $\Sam$ be a proper sampling. Then there exists $v \in \R^n$ such 
\begin{equation}
\label{eq:ESO}
\mP - pp^\top \preceq \Diag{p_1 v_1, p_2 v_2, \dots, p_n v_n}.
\end{equation}
Moreover, if $S \sim \Sam$, then
\begin{equation}\label{eq:key_inequality}
\textstyle
 \E{\left\|   \sum \limits_{i \in S} \frac{\zeta_i}{np_i} - \bar{\zeta}\right\|^2} \leq  \frac{1}{n^2} \sum \limits_{i=1}^n \frac{v_i}{p_i}  \|\zeta_i\|^2.
\end{equation}
\end{lemma}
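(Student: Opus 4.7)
My plan is to address the two claims in sequence. The key observation that unlocks both is that $\mP - pp^\top$ is the covariance matrix of the random indicator vector $\mathbf{1}_S \in \{0,1\}^n$ defined by $(\mathbf{1}_S)_i = \mathbbm{1}_{i \in S}$: indeed $\E{(\mathbf{1}_S)_i (\mathbf{1}_S)_j} = \mP_{ij}$ and $\E{(\mathbf{1}_S)_i} = p_i$, so $\mP - pp^\top \succeq 0$.

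For the existence of $v$, I would exploit PSDness directly: $\mP - pp^\top \preceq \lambda_{\max}(\mP-pp^\top)\,\mI$. Since $\Sam$ is proper, $p_i>0$ for all $i$, so setting $v_i \eqdef \lambda_{\max}(\mP-pp^\top)/p_i$ gives $\Diag{p_1 v_1,\ldots,p_n v_n} = \lambda_{\max}(\mP-pp^\top)\,\mI$, which dominates $\mP - pp^\top$. Tighter choices of $v$ are possible (e.g., ones exploiting the diagonal of $\mP-pp^\top$, which equals $p_i(1-p_i)$), but mere existence is all that is subsequently used.

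For the inequality, I would set $X \eqdef \sum_{i \in S} \frac{\zeta_i}{np_i} = \frac{1}{n}\sum_{i=1}^n \frac{\mathbbm{1}_{i \in S}}{p_i} \zeta_i$. A direct computation shows $\E{X} = \bar{\zeta}$, and expanding the squared norm bilinearly produces
\begin{equation*}
\E{\norm{X - \bar{\zeta}}^2} = \frac{1}{n^2}\sum_{i,j=1}^n \frac{\mP_{ij} - p_i p_j}{p_i p_j}\ip{\zeta_i}{\zeta_j}.
\end{equation*}
Letting $a_i \eqdef \zeta_i/p_i \in \R^d$, and for each coordinate index $k \in \{1,\ldots,d\}$ letting $w^{(k)} \in \R^n$ be the vector with entries $w^{(k)}_i = (a_i)_k$, the right-hand side rewrites as $\frac{1}{n^2}\sum_{k=1}^d (w^{(k)})^\top (\mP - pp^\top) w^{(k)}$. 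Applying the diagonal bound from the first part to each summand and then summing over $k$ yields $\frac{1}{n^2}\sum_{i=1}^n p_i v_i \norm{a_i}^2 = \frac{1}{n^2}\sum_{i=1}^n \frac{v_i}{p_i}\norm{\zeta_i}^2$, which is the claimed bound.

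The only conceptual subtlety is bridging the scalar-valued PSD inequality $\mP - pp^\top \preceq \Diag{p_i v_i}$ in $\R^n$ with the vector-valued variance over $\R^d$. The coordinate-wise decomposition above is the natural bridge: apply the scalar inequality to each of the $d$ coordinate slices of the data and sum. No further technical obstacle should arise; the rest is bookkeeping.
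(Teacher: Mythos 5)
Your proof is correct, and the core of it---unbiasedness of $X$ and the expansion of the variance into the quadratic form $\frac{1}{n^2}\sum_{i,j}(\mP_{ij}-p_ip_j)\,a_i^\top a_j$ with $a_i=\zeta_i/p_i$---is exactly the paper's computation. You diverge in two minor but genuine ways. First, for the existence of $v$ the paper uses the bound $\mP-pp^\top\preceq n\Diag{\mP-pp^\top}$, which yields the explicit and generally tighter choice $v_i=n(1-p_i)$, whereas your $v_i=\lambda_{\max}(\mP-pp^\top)/p_i$ establishes existence but is less useful downstream (the constant $a_\Sam=\max_i v_i/p_i$ enters the rate in Theorem~\ref{thm:u_n_p}, so the size of $v$ matters beyond mere existence). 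Second, for the final step the paper writes the quadratic form as $\frac{1}{n^2}e^\top\lp(\mP-pp^\top)\circ\mA^\top\mA\rp e$ and implicitly leans on the Schur product theorem to replace $\mP-pp^\top$ by its diagonal majorant inside the Hadamard product; your coordinate-slice decomposition $\sum_{k=1}^d (w^{(k)})^\top(\mP-pp^\top)w^{(k)}$ reaches the same bound by applying the PSD domination to $d$ ordinary quadratic forms, which is more elementary and is in effect an inlined proof of the Schur-product step the paper takes for granted. Both routes are sound; the paper's buys a sharper $v$, yours buys self-containedness.
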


\begin{figure}[t]
\centering
\includegraphics[width=0.35\textwidth]{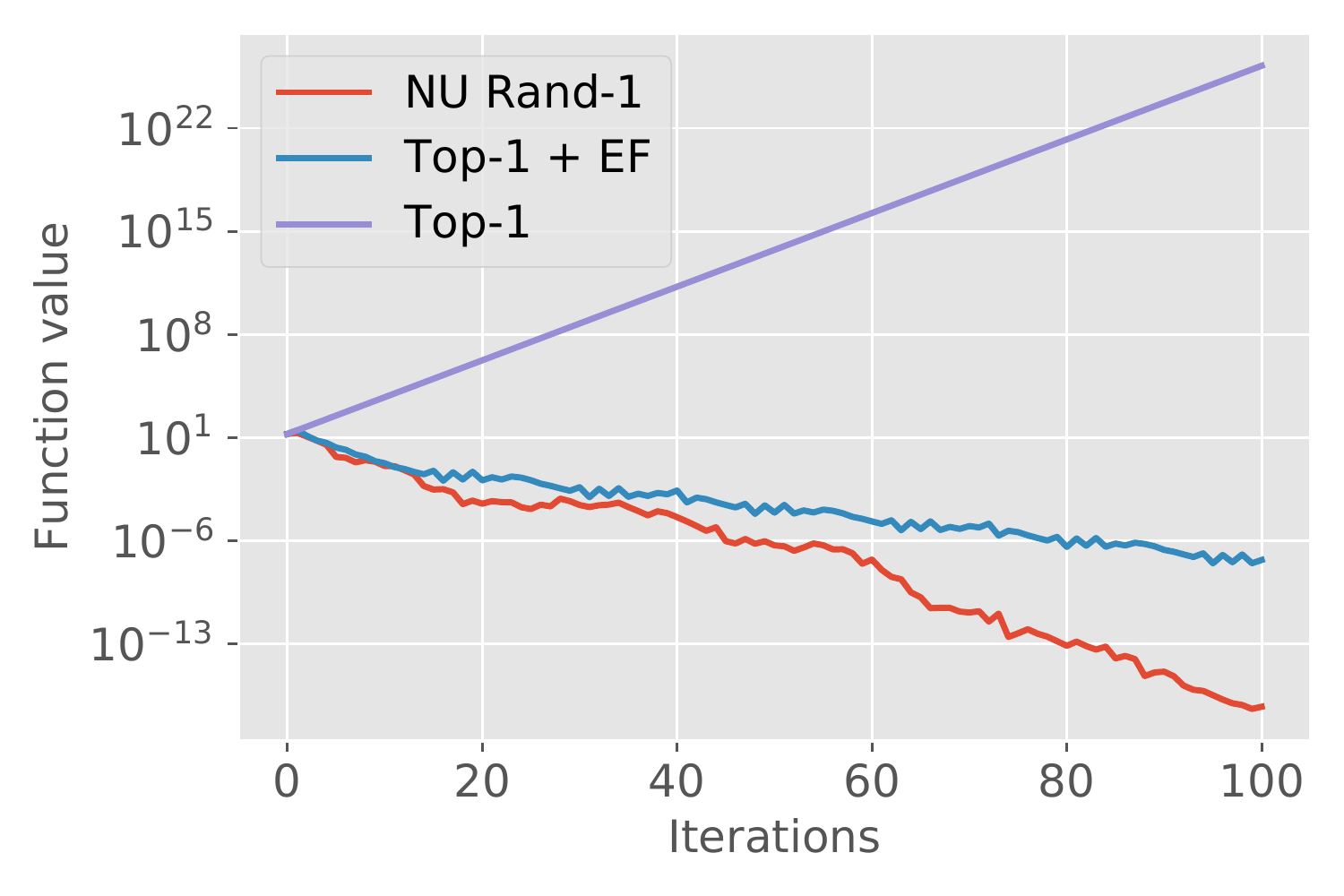}
\includegraphics[width=0.35\textwidth]{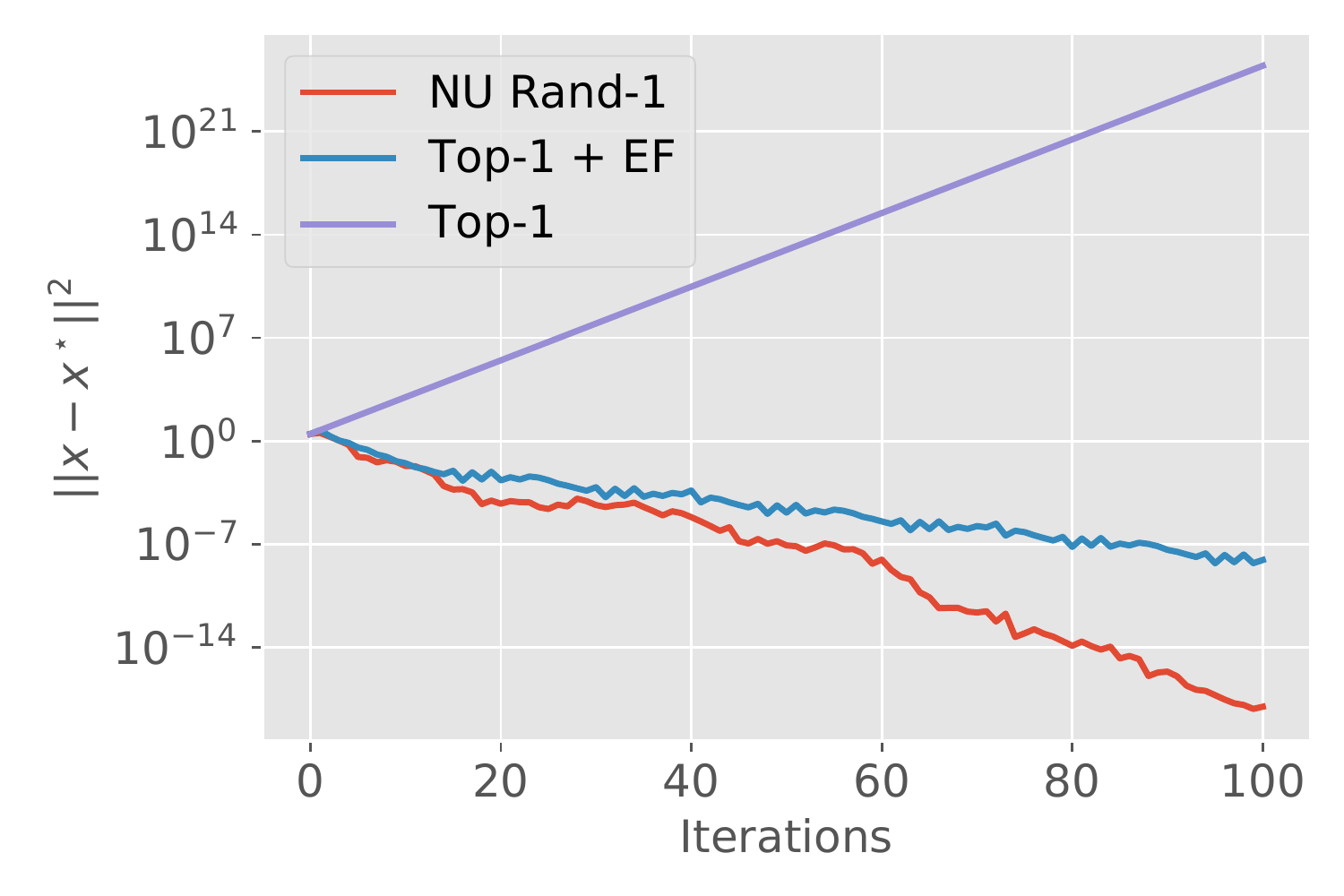}
\caption{Comparison of Top-$1$ (+ EF) and  NU Rand-$1$ on Example 1 from~\citet{beznosikov2020biased}.}
\label{fig:exp4}
\end{figure}

The following theorem establishes the convergence rate for Algorithm~\ref{alg:UC_SGD} with partial participation.

\begin{theorem}
\label{thm:u_n_p}
Let Assumptions~\ref{ass:1}--\ref{ass:3} hold and $\cC \in \U(\delta)$, then there exist stepsizes $\eta^k  \leq \frac{1}{2\delta_\Sam L}$ and weights $w^k \geq 0$ such that
\begin{equation*}
\label{eq:conv_u_n_p}
\textstyle
 \E{f(\bar{x}^T) - \fs} + \mu \E{\norm{x^T - \xs}^2}\leq  64 \delta_\Sam L r^0 \exp \left[-\frac{\mu T}{4 \delta_\Sam L} \right] + 36\frac{(\delta_\Sam - 1) D + \lp 1 + a_\Sam \rp \nicefrac{\delta\sigma^2}{n}}{\mu T} \,,
\end{equation*}
where $r^0, W^T, \bar{x}^T$, and $D$ are defined in Theorem~\ref{thm:u_n}, $a_\Sam = \max_{i \in [n]}\{\nicefrac{v_i}{p_i}\}$, and $\delta_\Sam = \frac{\delta a_\Sam + (\delta-1)}{n} + 1$.
\end{theorem}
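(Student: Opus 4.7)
The plan is to follow the structure of Theorem~\ref{thm:u_n} but replace the full-participation mean $\frac{1}{n}\sum_i \Delta_i^k$ by the partial-participation estimator $\tilde g^k \eqdef \sum_{i \in S^k} \frac{\Delta_i^k}{n p_i}$ with $\Delta_i^k = \cC^k(g_i^k)$. The only substantive new work is to re-derive an $(L,\sigma^2)$-style second-moment bound for $\tilde g^k$ in which the sampling noise (controlled by Lemma~\ref{lem:upperv}) and the compression noise (controlled by Definition~\ref{def:omegaquant}) are cleanly separated; once this is done, the recursion and the weighted-averaging argument used for Theorem~\ref{thm:u_n} can be reused verbatim.

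First I would verify unbiasedness: by $\Prob(i\in S^k)=p_i$ and the unbiasedness of $\cC^k$, $\E{\tilde g^k\mid x^k} = \frac{1}{n}\sum_{i=1}^n \E{\cC^k(g_i^k)\mid x^k} = \nabla f(x^k)$. Next I would bound $\E{\norm{\tilde g^k}^2\mid x^k}$ by a tower argument. Conditional on the compressed vectors $\{\Delta_i^k\}$, Lemma~\ref{lem:upperv} applied with $\zeta_i=\Delta_i^k$ gives
\begin{equation*}
\E{\left\|\tilde g^k - \bar\Delta^k\right\|^2 \;\Big|\; \{\Delta_i^k\}} \;\leq\; \frac{1}{n^2}\sum_{i=1}^n \frac{v_i}{p_i}\norm{\Delta_i^k}^2, \qquad \bar\Delta^k\eqdef\frac{1}{n}\sum_{i=1}^n\Delta_i^k,
\end{equation*}
so $\E{\norm{\tilde g^k}^2\mid \{\Delta_i^k\}} \leq \frac{1}{n^2}\sum_i \frac{v_i}{p_i}\norm{\Delta_i^k}^2 + \norm{\bar\Delta^k}^2$. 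Taking expectation over the independent compressors conditional on $\{g_i^k\}$, I would use $\E\norm{\Delta_i^k}^2\leq \delta\norm{g_i^k}^2$ together with $\E\norm{\bar\Delta^k}^2 = \norm{\bar g^k}^2 + \frac{1}{n^2}\sum_i\E\norm{\Delta_i^k - g_i^k}^2 \leq \norm{\bar g^k}^2 + \frac{\delta-1}{n^2}\sum_i\norm{g_i^k}^2$.

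Combining these and taking expectation over the stochastic gradients via Assumption~\ref{ass:3}, using the identity $\frac{1}{n}\sum_i(f_i(x)-f_i^\star) = (f(x)-\fs) + \tfrac{D}{2L}$, and the bound $\sum_i \frac{v_i}{p_i}\norm{g_i^k}^2 \leq a_\Sam \sum_i \norm{g_i^k}^2$ with $a_\Sam = \max_i v_i/p_i$, I expect to arrive at
\begin{equation*}
\E{\norm{\tilde g^k}^2 \mid x^k} \;\leq\; 2L\,\delta_\Sam\,(f(x^k)-\fs) + (\delta_\Sam - 1)D + (1+a_\Sam)\frac{\delta\sigma^2}{n},
\end{equation*}
with $\delta_\Sam = 1 + \frac{\delta a_\Sam + \delta - 1}{n}$, matching the constants in the statement. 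The role of $a_\Sam$ is precisely to translate sampling inhomogeneity into an inflation of the effective compression factor.

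Finally, with $\tilde g^k$ unbiased and the above second-moment bound in hand, the standard one-step SGD inequality
\begin{equation*}
\E{\norm{x^{k+1}-\xs}^2\mid x^k} \leq (1-\eta^k\mu)\norm{x^k-\xs}^2 - 2\eta^k(1-\eta^k L\delta_\Sam)(f(x^k)-\fs) + (\eta^k)^2\Big[(\delta_\Sam-1)D + (1+a_\Sam)\tfrac{\delta\sigma^2}{n}\Big]
\end{equation*}
follows from $\mu$-quasi-convexity exactly as in the proof of Theorem~\ref{thm:u_n}. The stepsize restriction $\eta^k \leq \frac{1}{2\delta_\Sam L}$ guarantees $1-\eta^k L\delta_\Sam \geq \tfrac12$, and then the weighted-averaging stepsize lemma of \citet{stich2019unified} (already used for Theorem~\ref{thm:u_n}) yields the claimed rate with the same constants $64$ and $36$. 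The main obstacle, as anticipated, is the bookkeeping in the variance decomposition: one must be careful to separate the sampling variance (which only sees $\norm{\Delta_i^k}^2$ and thus picks up $\delta a_\Sam$) from the residual compression variance on the full mean (which contributes $\delta-1$ but no $a_\Sam$), so that the two combine cleanly into $\delta_\Sam$ in front of $(f(x^k)-\fs)$ and $D$, while the stochastic-gradient noise combines into the $(1+a_\Sam)\tfrac{\delta\sigma^2}{n}$ term.
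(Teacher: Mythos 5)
Your proposal is correct and follows essentially the same route as the paper: the same bias--variance decomposition of the estimator into the sampling variance (Lemma~\ref{lem:upperv} applied with $\zeta_i=\cC^k(g_i^k)$, contributing $\delta a_\Sam$) plus the compression variance of the full mean (contributing $\delta-1$), yielding the identical one-step recursion with $c=(\delta_\Sam-1)D+(1+a_\Sam)\nicefrac{\delta\sigma^2}{n}$, after which the weighted-averaging argument of Theorem~\ref{thm:u_n} is reused verbatim. The constants you track ($\delta_\Sam$, the $D$-identity, and the $(1+a_\Sam)$ noise factor) all check out against the paper's derivation.
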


\begin{figure}[t]
\center
\includegraphics[width=0.36\textwidth]{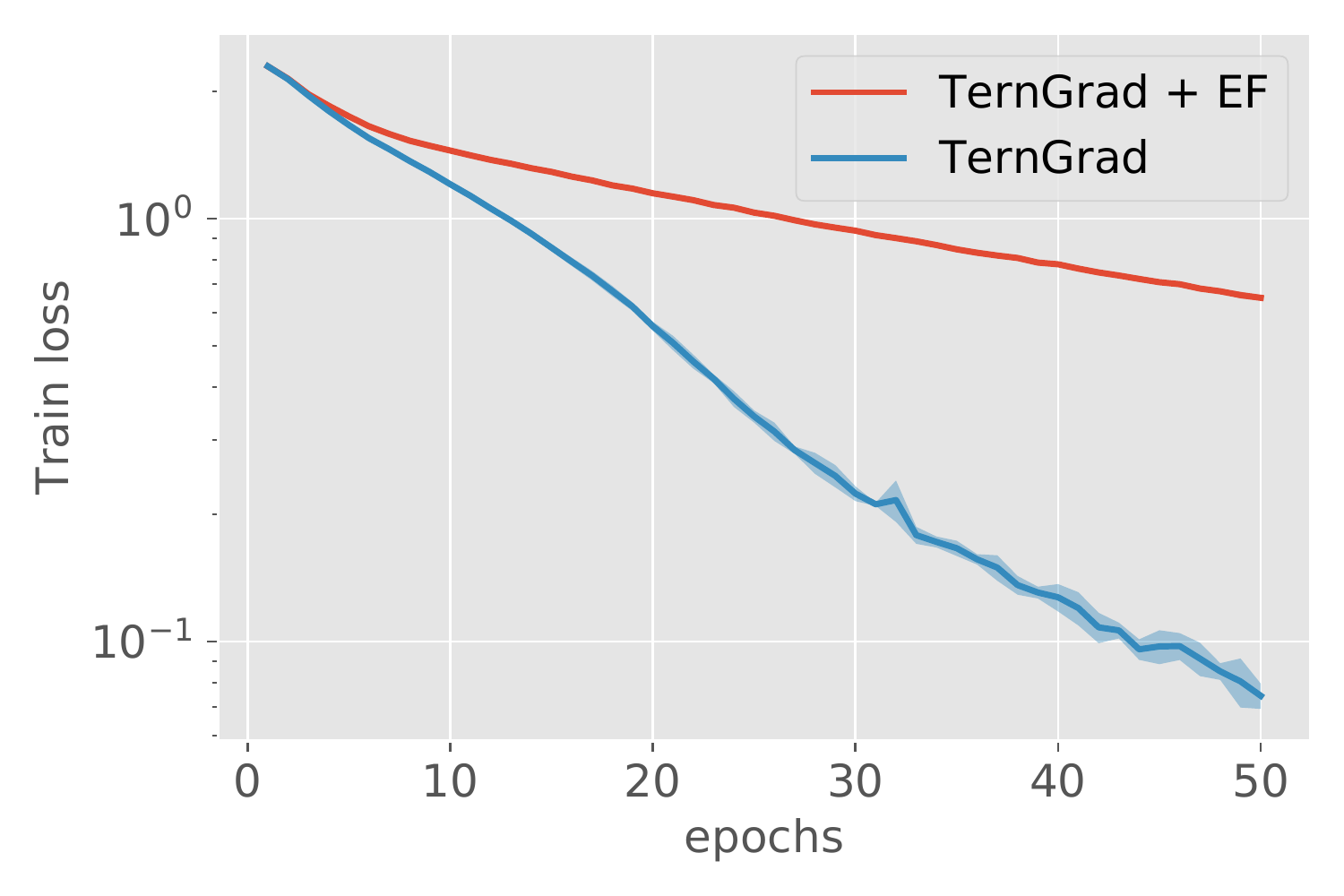}
\includegraphics[width=0.36\textwidth]{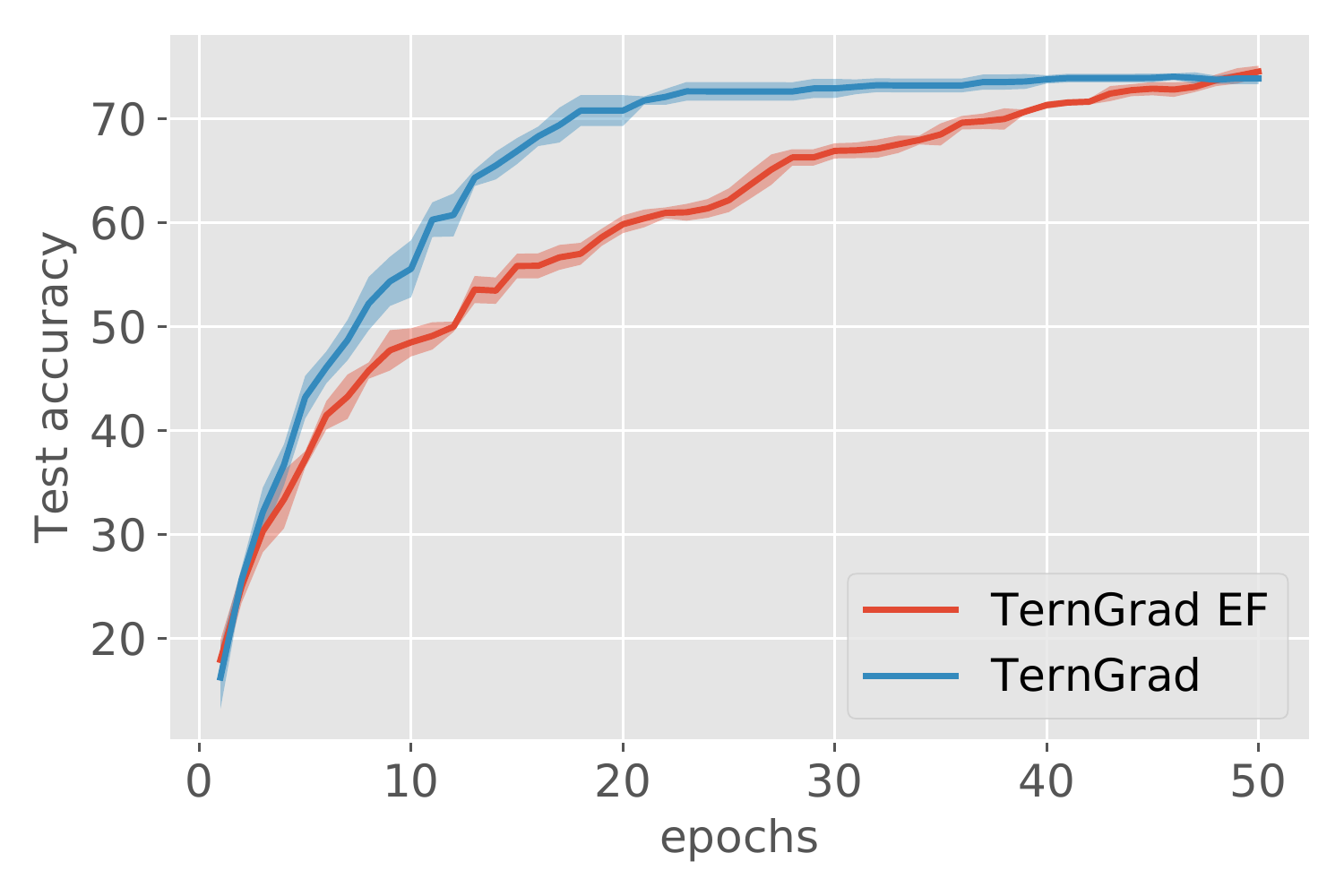} \\
\includegraphics[width=0.36\textwidth]{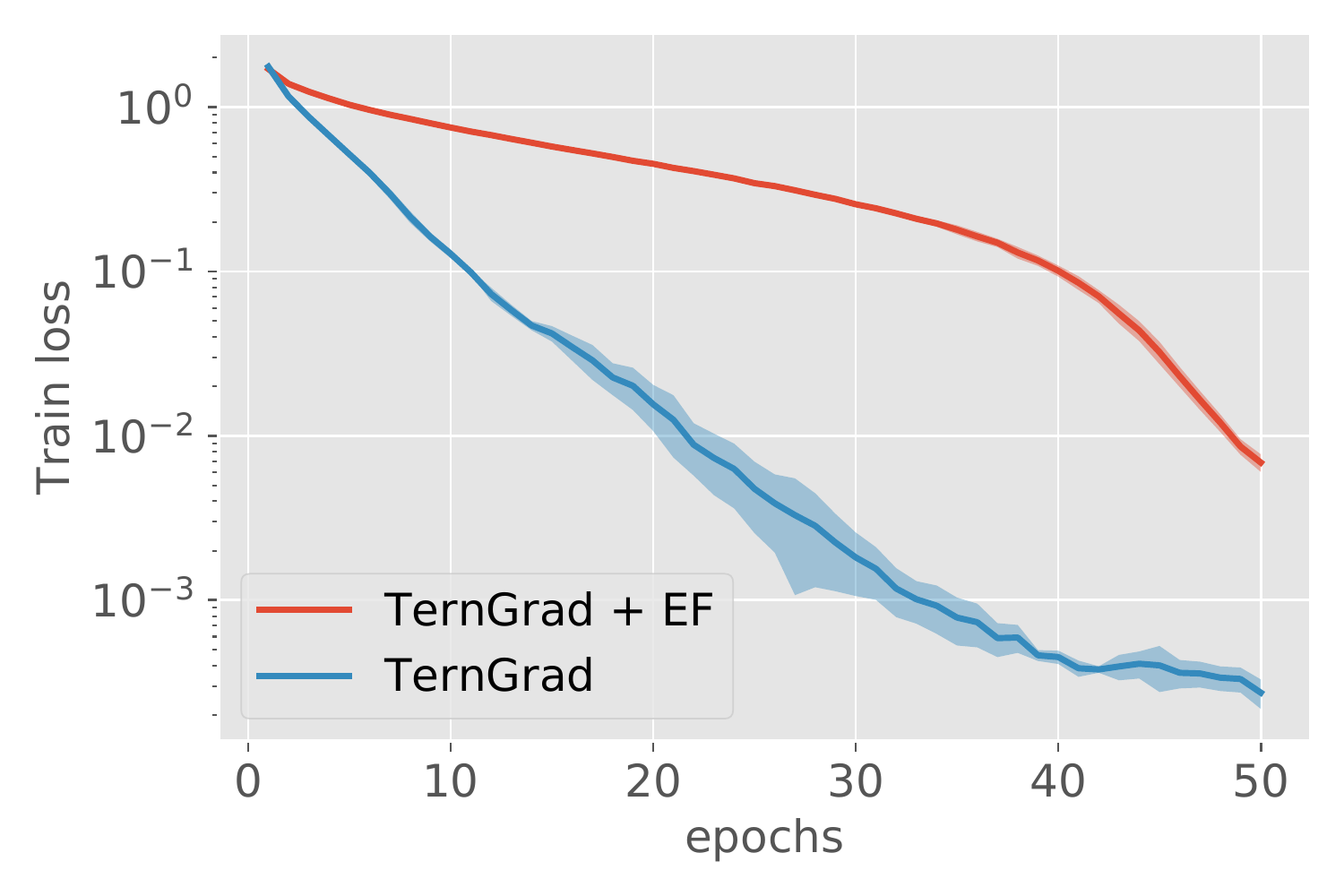}
\includegraphics[width=0.36\textwidth]{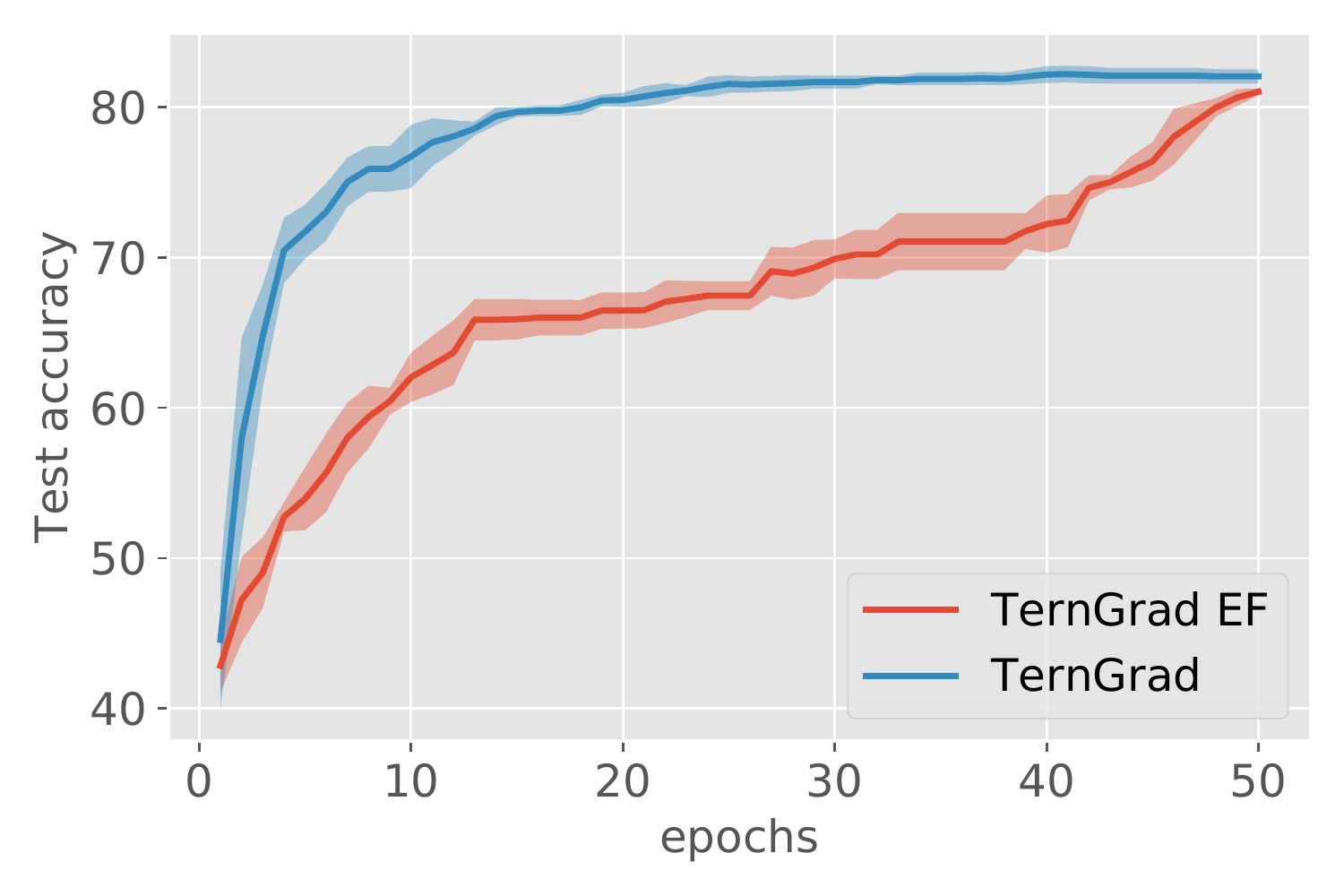}
\caption{Algorithm~\ref{alg:UC_SGD} vs. Algorithm~\ref{alg:EF_SGD} on CIFAR10 with ResNet18 (bottom), VGG11 (top) and TernGrad as a compression.}
\label{fig:exp1}
\end{figure}

For the case $\Sam = [n]$ with probability $1$, one can show that Lemma~\ref{lem:upperv} holds with $v = 0$, and hence we exactly recover the results of Theorem~\ref{thm:u_n}. In addition, we can quantify the slowdown factor with respect to full participation regime (Theorem~\ref{thm:u_n}), which is $\delta\max_{i \in [n]} \frac{v_i}{p_i}$. While in our framework we assume the distribution $\Sam$ to be fixed,  it can be easily extended to several proper distributions $\Sam_j$'s or we can even handle a block-cyclic structure with each block having an arbitrary proper distribution $\Sam_j$ over the given block $j$ combining our analysis with the results of~\citet{eichner2019semi}.

\textbf{Obtaining Linear Convergence.}
\label{sec:linear_covergence}
Note that in all the previous theorems, we can only  guarantee a sublinear $\cO(\nicefrac{1}{T})$  convergence rate. Linear rate is obtained in the special case when $D = 0$ and $\sigma^2 = 0$. The first condition is satisfied, when $f_i^\star = f_i(x^\star)$ for all $i \in [n]$, thus when $x^\star$ is also minimizer of every local function $f_i$.  Furthermore, the effect od $D$ can be removed using compression of gradient differences, as pioneered in the DIANA algorithm~\citep{mishchenko2019distributed}.
Note that $\sigma^2 = 0$ if weak growth condition holds~\citep{vaswani2019fast}. Moreover, one can remove effect of $\sigma^2$ by either computing full gradients locally or by incorporating variance reduction such as SVRG~\citep{johnson2013accelerating}. It was shown by~\citet{horvath2019stochastic} that both $\sigma^2$ and $D$ can be removed for the setting of Theorem~\ref{thm:u_n}. These results can be easily extended to partial participation using our proof technique for Theorem~\ref{thm:u_n_p}. Note that this reduction is not possible for Error Feedback as the analysis of the DIANA algorithm is heavily dependent on the unbiasedness property. This points to another advantage of the induced compressor framework introduced in Section~\ref{sec:construction}.

\textbf{Acceleration.}
We now comment on the combination of compression and acceleration/momentum. This setting is very important to consider as essentially all state-of-the-art methods for training deep learning models, including Adam \citep{kingma2014adam, reddi2019convergence}, rely on the use of momentum in one form or another. One can treat the unbiased compressed gradient as a stochastic gradient \citep{sigma_k} and the theory for momentum SGD~\citep{yang2016unified, gadat2018stochastic, loizou2017momentum} would be applicable with an extra smoothness assumption. Moreover, it is possible to remove the variance caused by stochasticity and obtain linear convergence with an accelerated rate, which leads to the Accelerated DIANA method~\citep{li2020acceleration}. Similarly to our previous discussion, both of these techniques are heavily dependent on the unbiasedness property. It is an intriguing question, but out of the scope of the paper, to investigate the combined effect of momentum and Error Feedback and see whether these techniques are compatible theoretically.

\section{Experiments}

In this section, we compare Algorithms~\ref{alg:UC_SGD} and~\ref{alg:EF_SGD} for several compression operators. If the method contains `` + EF '', it means that EF is applied, thus Algorithm~\ref{alg:EF_SGD} is applied. Otherwise, Algorithm~\ref{alg:UC_SGD} is displayed. To be fair, we always compare methods with the same communication complexity per iteration. All experimental details can be found in the Appendix.

\begin{figure}[t]
\centering
\includegraphics[width=0.36\textwidth]{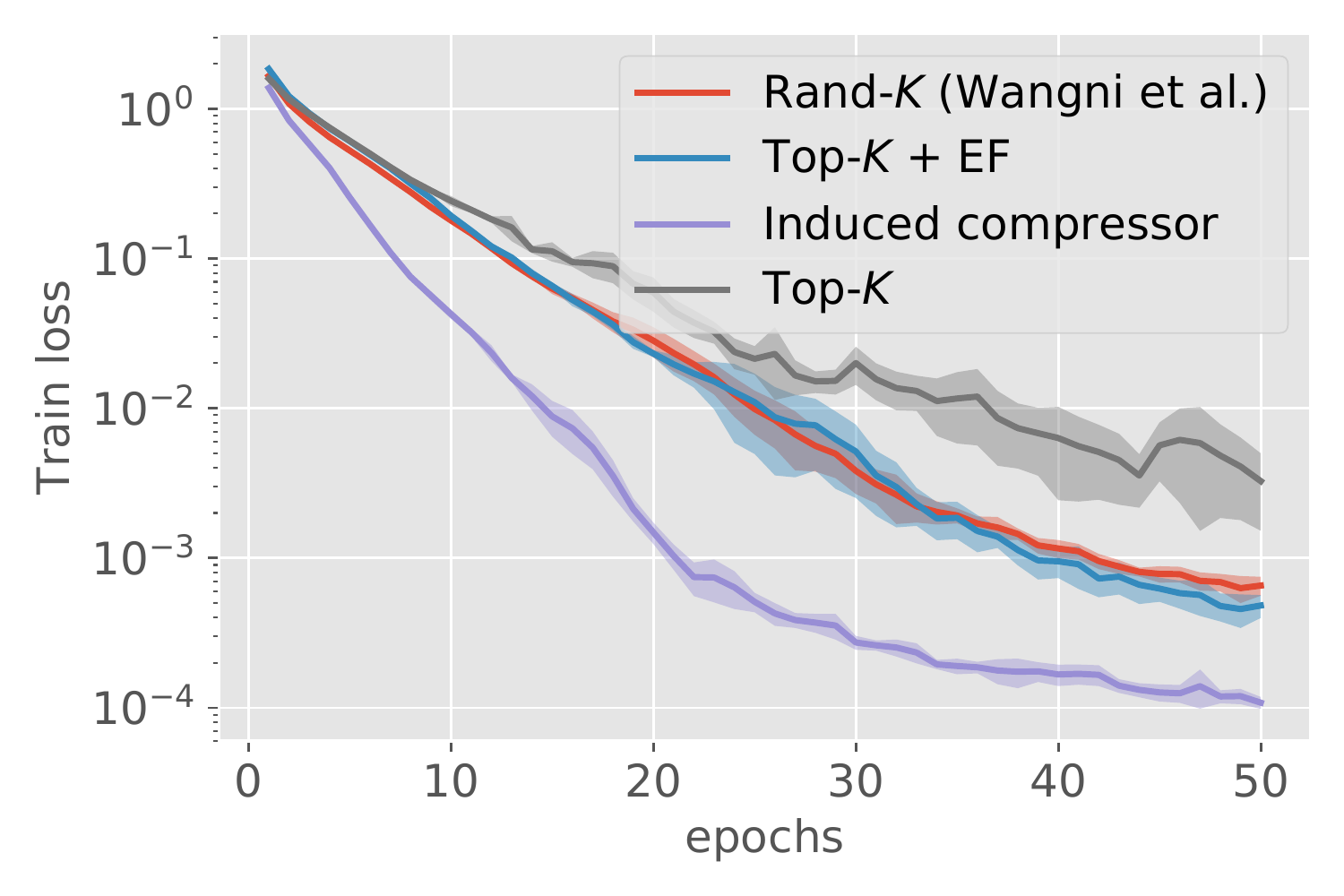}
\includegraphics[width=0.36\textwidth]{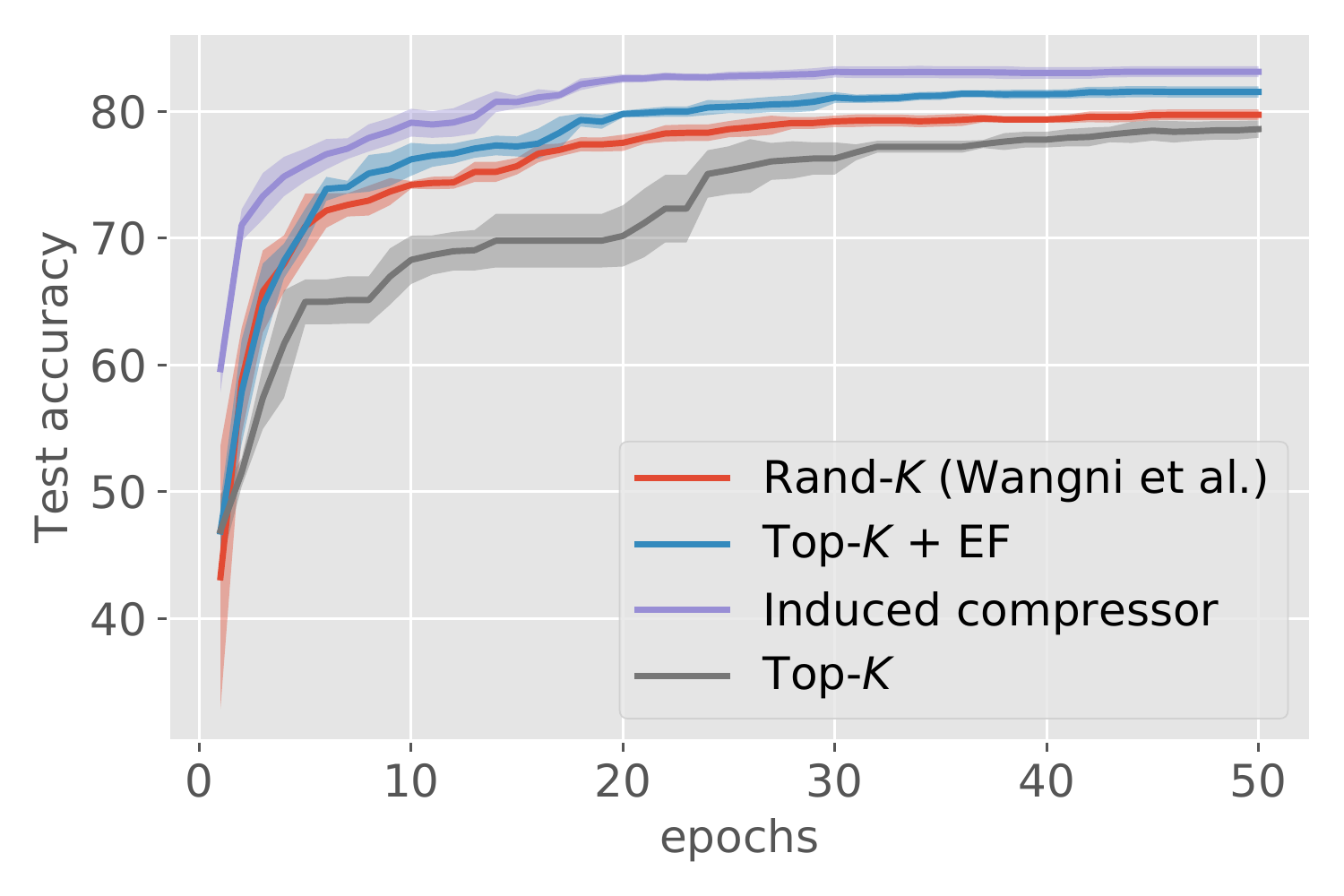} \\
\includegraphics[width=0.36\textwidth]{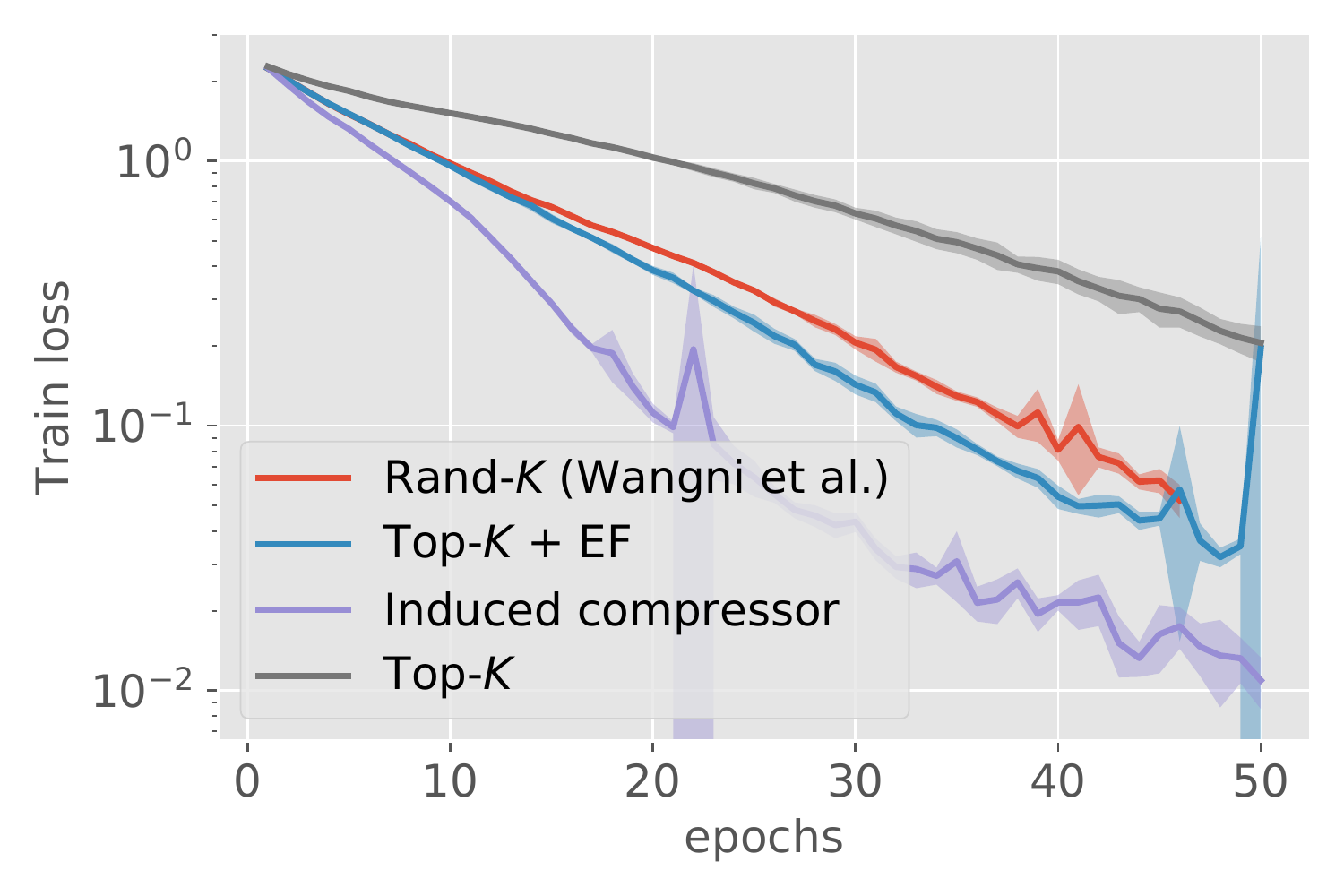}
\includegraphics[width=0.36\textwidth]{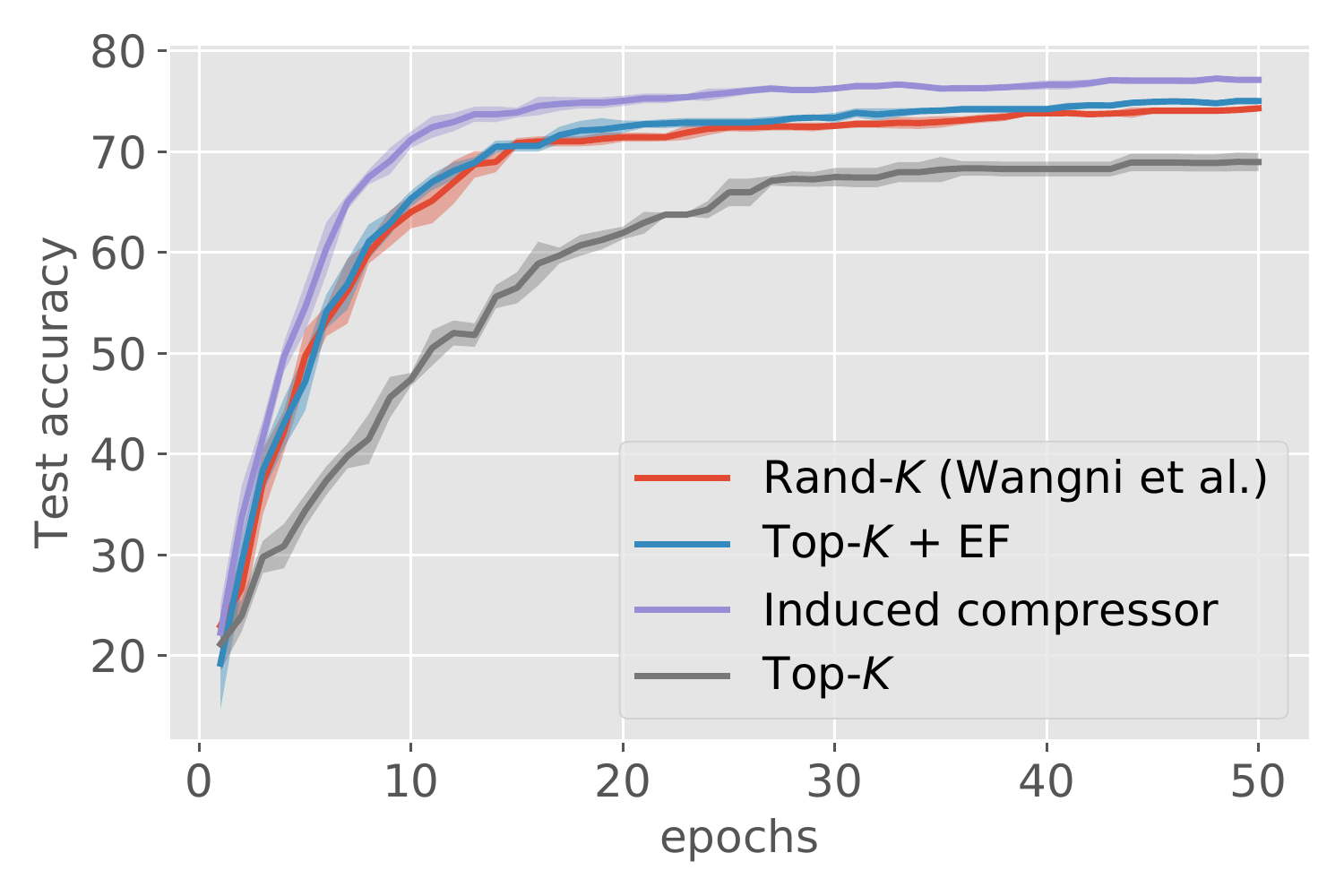}
\caption{Comparison of different sparsification techniques with and without usage of Error Feedback on CIFAR10 with Resnet18 (top) and VGG11 (bottom). $K =  5\% * d$, for Induced compressor $\cC_1$ is Top-$\nicefrac{K}{2}$ and $\cC_2$ is Rand-$\nicefrac{K}{2}$ (Wangni et al.).}
\label{fig:exp2}
\end{figure}

\textbf{Failure of DCSGD with biased Top-$\mathbf{1}$.} 
In this experiment, we present example considered in~\citet{beznosikov2020biased}, which was used as a counterexample to show that some form of error correction is needed in order for biased compressors to work/provably converge.  In addition, we run experiments on their construction and show that while Error Feedback fixes divergence, it is still significantly dominated by unbiased non-uniform sparsification(NU Rand-$1$), which works by only keeping one non-zero coordinate sampled with probability equal to $\nicefrac{|x|}{\sum_{i=1}^d |x|_i}$, where $|x|$ denotes element-wise absolute value, as can be seen in Figure~\ref{fig:exp4}. The details can be found in the Appendix.

\textbf{Error Feedback for Unbiased Compression Operators.}
In our second experiment, we compare the effect of Error Feedback in the case when an unbiased compressor is used. Note that unbiased compressors  are theoretically guaranteed to work both with Algorithm~\ref{alg:UC_SGD} and~\ref{alg:EF_SGD}. We can see from Figure~\ref{fig:exp1} that adding Error Feedback can hurt the performance; we use TernGrad~\citep{wen2017terngrad} (coincides with QSGD~\citep{alistarh2016qsgd} and natural dithering~\citep{horvath2019natural} with the infinity norm and one level) as compressors. This agrees with our theoretical findings. In addition, for sparsification techniques such as Random Sparsification or Gradient Sparsification~\citep{wangni2018gradient}, we observed that when sparsity is set to be 10 \%, Algorithm~\ref{alg:UC_SGD} converges for all the selected values of step-sizes, but Algorithm~\ref{alg:EF_SGD} diverges and a smaller step-size needs to be used. This is an important observation as many practical works~\citep{li2014scaling, wei2015managed, aji2017sparse, hsieh2017gaia, lin2017deep, lim20183lc} use sparsification techniques mentioned in this section, but proposed to use EF, while our work shows that using unbiasedness property leads  not only to better convergence but also to memory savings.

\textbf{Unbiased Alternatives to Biased Compression.}
In this section, we investigate candidates for unbiased compressors than can compete with Top-$K$, one of the most frequently used compressors. Theoretically, Top-$K$ is not guaranteed to work by itself and might lead to divergence~\citep{beznosikov2020biased}  unless Error Feedback is applied. One would usually compare the performance of Top-$K$ with EF to Rand-$K$, which keeps $K$ randomly selected coordinates and then scales the output by $\nicefrac{d}{K}$ to preserve unbiasedness. Rather than  naively comparing to Rand-$K$, we propose to use more nuanced unbiased approaches. The first one is Gradient Sparsification proposed by Wagni et al.~\citep{wangni2018gradient}, which we refer to here as Rand-$K$ (Wangni et al.), where the probability of keeping each coordinate scales with its magnitude and communication budget. As the second alternative, we propose to use our induced compressor, where $\cC_1$ is Top-$a$ and unbiased part $\cC_2$ is Rand-$(K-a)$ (Wangni et al.) with communication budget $K-a$. It should be noted that $a$ can be considered as a hyperparameter to tune. For our experiment, we chose it to be $\nicefrac{K}{2}$ for simplicity. Figure~\ref{fig:exp2} suggests that our induced compressor  outperforms all of its competitors as can be seen for both VGG11 and Resnet18. Moreover, induced compressor  as well as Rand-$K$ do not require extra memory to store the error vector. Finally, Top-$K$ without EF suffers a significant decrease in performance, which stresses the necessity of error correction.

\section{Conclusion}

In this paper, we argue that if compressed communication is required for distributed training due to communication overhead, it is better to use unbiased compressors. We show that this leads to strictly better convergence guarantees with fewer assumptions. In addition, we propose a new construction for transforming any compressor into an unbiased one using a compressed EF-like approach. Besides  theoretical superiority, usage of unbiased compressors enjoys lower memory requirements. Our theoretical findings are corroborated with empirical evaluation.

As a future work we plan to investigate the question of the appropriate choice of the inducing compressor $\cC$. Our preliminary studies show that there is much to be discovered here, both in theory and in terms of developing further practical guidelines to those already contained in this work. The question of (theoretically) optimizing for $\cC_1$ and $\cC_2$ is difficult, as it necessitates a deeper theoretical understanding of biased compressors, which is currently missing. An alternative is to impose some assumptions on the structure of gradients encountered during the iterative process, or to perform an extensive experimental evaluation on desired tasks to provide guidelines for practitioners.

\bibliography{../literature.bib}
\bibliographystyle{iclr2021_conference}

\appendix
\newpage
\section*{Appendix}

\begin{figure}[t]
\center
\includegraphics[width=0.36\textwidth]{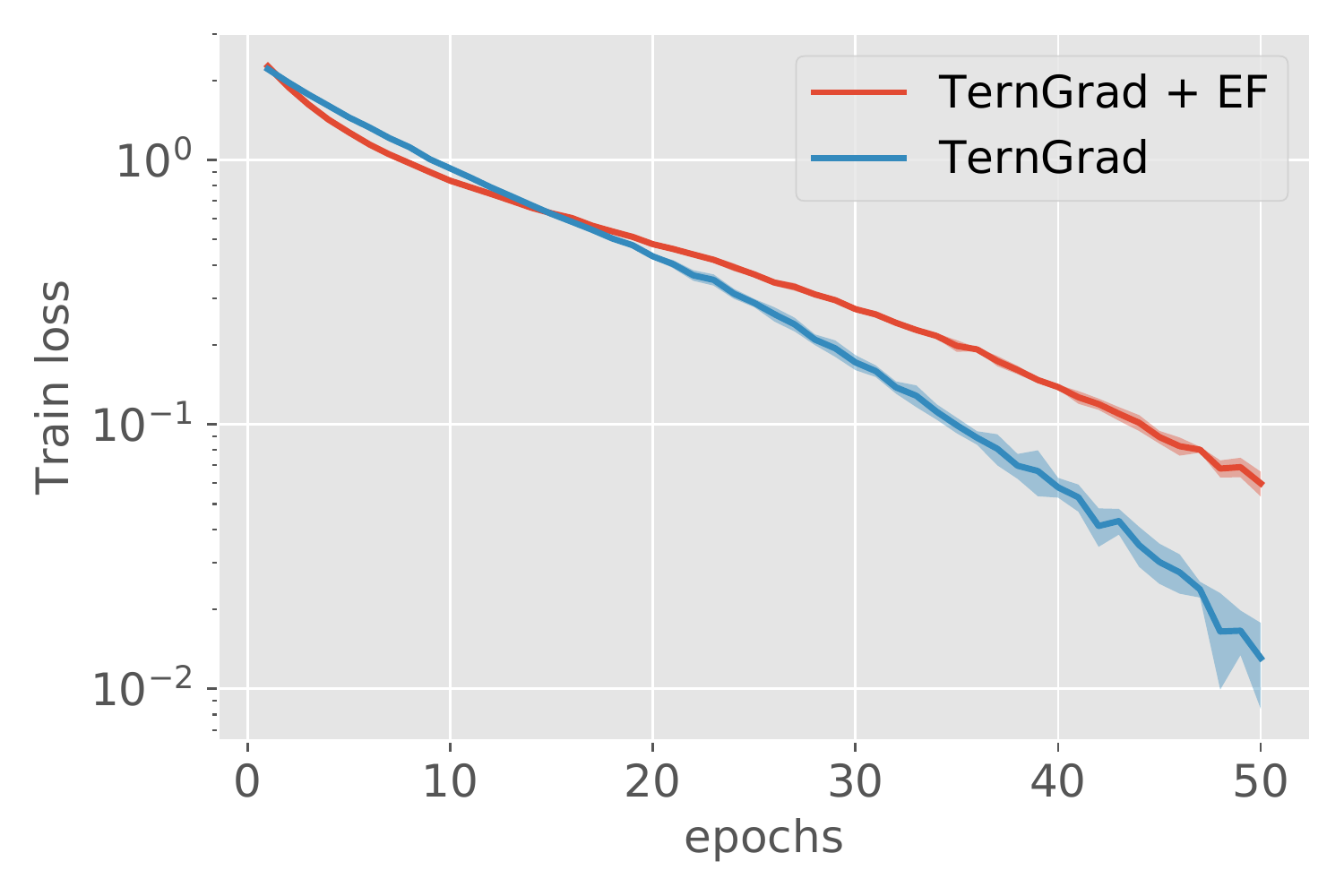}
\includegraphics[width=0.36\textwidth]{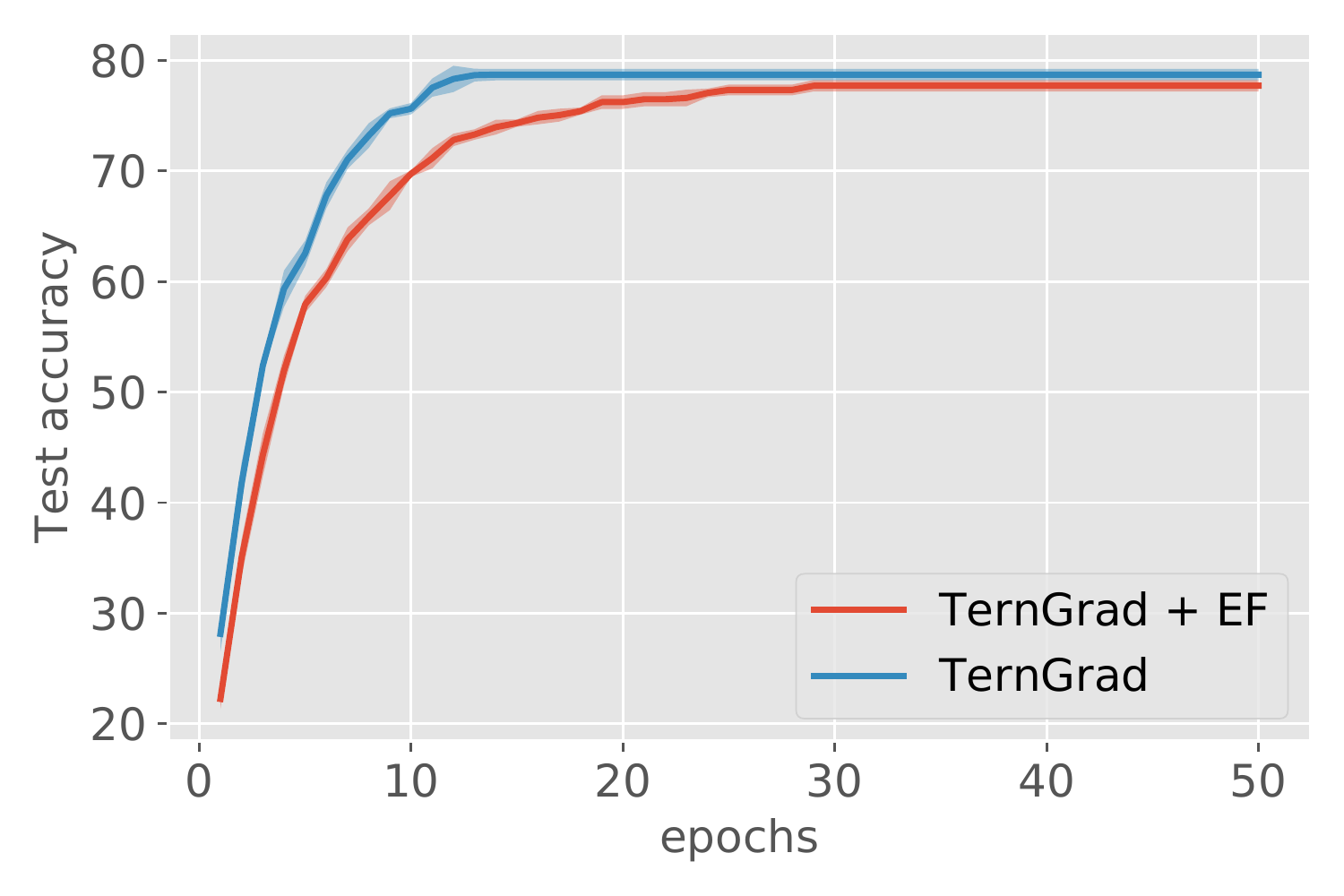} \\
\includegraphics[width=0.36\textwidth]{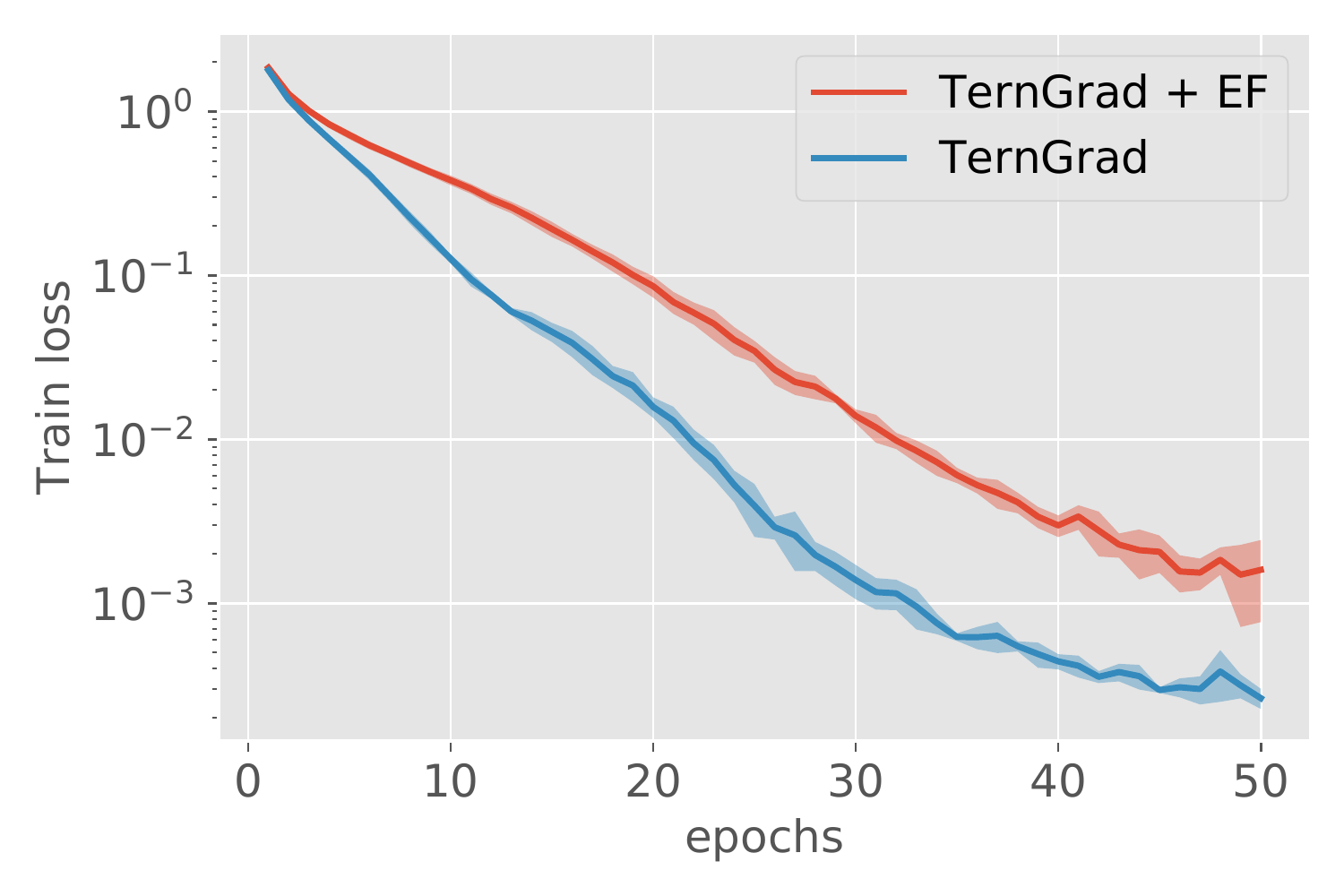}
\includegraphics[width=0.36\textwidth]{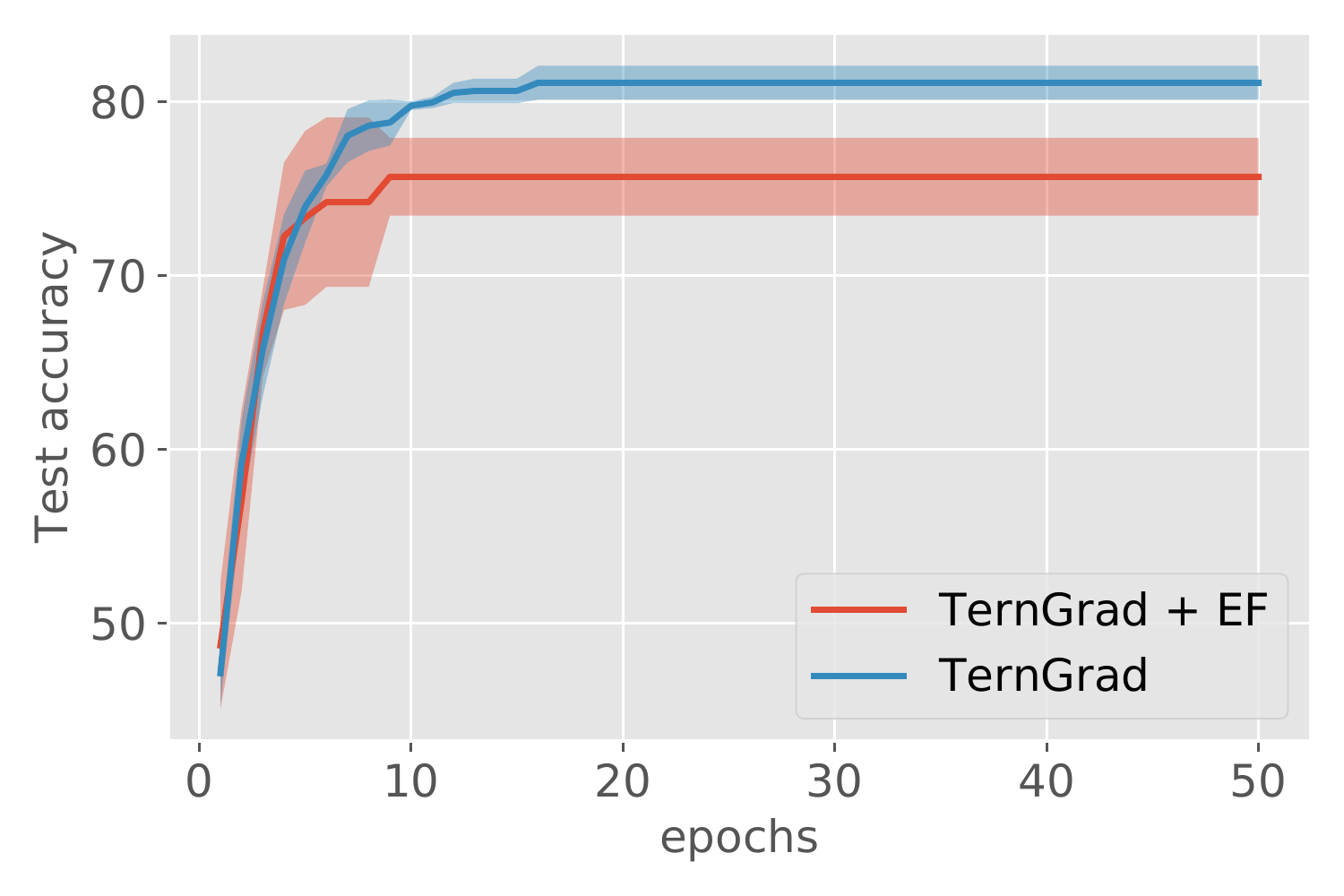}
\caption{Algorithm~\ref{alg:UC_SGD} vs. Algorithm~\ref{alg:EF_SGD} on CIFAR10 with ResNet18 (bottom), VGG11 (top) and TernGrad as a compression.}
\label{fig:exp_app}
\end{figure}

\section{Experimental Details}
 To be fair, we always compare methods with the same communication complexity per iteration. We report the number of epochs (passes over the dataset) with respect to training loss and testing accuracy. The test accuracy is obtained by evaluating the best model in terms of validation accuracy. A validation accuracy is computed based on $10$ \%  randomly selected training data. We tune the step-size using based on the training loss. For every experiment, we randomly distributed the training dataset among $8$ workers; each worker computes its local gradient-based on its own dataset. We used a local batch size of $32$. All the provided figures display the mean performance with one standard error over $5$ independent runs. For a fair comparison, we use the same random seed for the compared methods. Our experimental results are based on a Python implementation of all the methods running in PyTorch. All reported quantities are independent of the system architecture and network bandwidth.

\textbf{Dataset and Models.} We do an evaluation on CIFAR10  dataset. We consider VGG11~\citep{simonyan2014very} and ResNet18~\citep{he2016deep} models and step-sizes $0.1, 0.05$ and $0.01$.

\subsection{Extra Experiments}

\textbf{Momentum.}
In this extra experiment, we look at the effect of momentum on Algorithm~\ref{alg:UC_SGD} and~\ref{alg:EF_SGD}. We set momentum to $0.9$. Similarly to Figure~\ref{fig:exp1}, we work with the unbiased compressor, concretely TernGrad~\citep{wen2017terngrad} (coincides with QSGD~\citep{alistarh2016qsgd} and natural dithering~\citep{horvath2019natural} with the infinity norm and one level), to see the effect of adding Error Feedback. We can see from Figure~\ref{fig:exp_app} that adding Error Feedback can hurt the performance, which agrees with our theoretical findings. 

\section{Example 1, \citet{beznosikov2020biased}}
\label{app:counter-example}
In this section, we present example considered in~\citet{beznosikov2020biased}, which was used as a counterexample to show that some form of error correction is needed in order for biased compressors to work/provably converge.  In addition, we run experiments on their construction and show that while Error Feedback fixes divergence, it is still significantly dominated by unbiased non-uniform sparsification as can be seen in Figure~\ref{fig:exp4}. The construction follows.

Consider $n=d=3$ and define the following smooth and strongly convex quadratic functions
$$
f_1(x) = \dotprod{a,x}^2 + \frac{1}{4}\norm{x}^2, \qquad f_2(x) = \dotprod{b,x}^2 + \frac{1}{4}\norm{x}^2, \qquad f_3(x) = \dotprod{c,x}^2 + \frac{1}{4}\norm{x}^2,
$$
where $a=(-3,2,2), b=(2,-3,2), c=(2,2,-3)$. Then, with the initial point $x^0=(t,t,t),\;t>0$
$$
\nabla f_1(x^0) =\frac{t}{2} (-11, 9, 9), \qquad \nabla f_2(x^0) =\frac{t}{2} (9,-11, 9), \qquad \nabla f_3(x^0) = \frac{t}{2} (9,9,-11).
$$
Using the Top-$1$ compressor, we get $$\cC(\nabla f_1(x^0)) = \tfrac{t}{2}(-11, 0, 0) , \quad \cC(\nabla f_2(x^0)) = \tfrac{t}{2}(0,-11, 0), \quad \cC(\nabla f_3(x^0)) = \tfrac{t}{2}(0,0,-11).$$ The next iterate of DCGD is
$$
x^1 = x^0 -  \frac{\eta}{3} \sum_{i=1}^3 \cC(\nabla f_i(x^0)) =  \left(1+\frac{11 \eta}{6}\right)x^0.
$$
Repeated application gives $x^k = \left(1+\frac{11 \eta}{6}\right)^k x^0$, which diverges exponentially fast to $+\infty$ since $\eta>0$. 

As a initial point, we use $(1, 1, 1)^\top$ in our experiments and we choose step size $\frac{1}{L}$, where $L$ is smoothness parameter of $f = \frac{1}{3}(f_1 + f_2 + f_3)$. Note that zero vector is the unique minimizer of $f$.

\section{Proofs}

\subsection{Proof of Lemma~\ref{lem:subset}}
We follow \eqref{eq:omega_quant}, which holds for $\cC \in \U(\delta)$.
\begin{eqnarray*}
\E{\norm{\frac{1}{\delta}\cC^k(x) - x}^2} & = & \frac{1}{\delta^2}\E{\norm{\cC^k(x)}^2} - 2 \frac{1}{\delta} \dotprod{\E{\cC^k(x)},x} + \norm{x}^2 \\
& \leq & \lp \frac{1}{\delta} - \frac{2}{\delta}  + 1 \rp \norm{x}^2 \\
& = & \lp 1 - \frac{1}{\delta}  \rp \norm{x}^2,
\end{eqnarray*}
which concludes the proof.

\subsection{Proof of Theorem~\ref{thm:u_n}}
We use the update of Algorithm~\ref{alg:UC_SGD} to bound the following quantity
\begin{eqnarray*}
\E{\norm{x^{k+1} - \xs}^2 |x^k} &=& \norm{x^k - \xs}^2 - \frac{\eta^k}{n} \sum_{i=1}^n\E{\dotprod{\cC^k(g_i^k), x^k - \xs }|x^k} + \\
 && \quad \lp\frac{\eta^k}{n}\rp^2 \E{\norm{\sum_{i=1}^n\cC^k(g_i^k)}^2 |x^k} \\
&\overset{\eqref{eq:omega_quant} + \eqref{eq:grad_unbiased}}{\leq}& \norm{x^k - \xs}^2 - \eta^k \dotprod{\nabla f(x^k), x^k - \xs } +  \\
&&\quad \frac{(\eta^k)^2 }{n^2}  \E{ \sum_{i=1}^n\norm{\cC^k(g_i^k) -  g_i^k}^2 + \norm{ \sum_{i=1}^n g_i^k }^2|x^k} \\
&\overset{\eqref{eq:omega_quant} }{\leq}& \norm{x^k - \xs}^2 - \eta^k \dotprod{\nabla f(x^k), x^k - \xs } +  \\
&&\quad \frac{(\eta^k)^2 }{n^2}  \E{ (\delta - 1)\sum_{i=1}^n\norm{\ g_i^k}^2 + \norm{ \sum_{i=1}^n g_i^k }^2|x^k} \\
&\overset{\eqref{eq:L_smooth_f_i} + \eqref{eq:L_smooth_f}}{\leq}& \norm{x^k - \xs}^2 - \eta^k \dotprod{\nabla f(x^k), x^k - \xs } + \\
&&\quad 2L (\eta^k)^2  \lp \delta_n(f(x^k) - \fs) +  (\delta_n - 1)\frac{1}{n}\sum_{i=1}^n (f_i(\xs) - f _i^\star) \rp +    (\eta^k)^2 \frac{\delta\sigma^2}{n}\\
&\overset{\eqref{eq:quasi_convex}}{\leq}& (1 - \mu \eta^k) \norm{x^k - \xs}^2 - 2\eta^k \lp 1 - \eta^k \delta_n L   \rp (f(x^k) - \fs) + \\
&&\quad  (\eta^k)^2 \lp(\delta_n - 1) D +  \frac{\delta\sigma^2}{n}\rp.
\end{eqnarray*}
Taking full expectation and $\eta^k \leq \frac{1}{2\delta_n L}$, we obtain
\begin{eqnarray*}
\E{\norm{x^{k+1} - \xs}^2} \leq (1 - \mu \eta^k) \E{\norm{x^k - \xs}^2} - \eta^k  \E{f(x^k) - \fs} +  (\eta^k)^2 \lp(\delta_n-1) D +  \frac{\delta\sigma^2}{n}\rp.
\end{eqnarray*}

The rest of the analysis is closely related to the one of \citet{stich2019unified}. We would like to point out that similar results to~\citet{stich2019unified} were also present in~\citep{lacoste2012simpler, stich2018sparsified, grimmer2019convergence}.

We first rewrite the previous inequality to the form
\begin{eqnarray}
\label{eq:sequence}
r^{k+1} \leq (1 - a \eta^k) r^k -  \eta^k  s^k +  (\eta^k)^2 c,
\end{eqnarray}
where $r^k = \E{\norm{x^k - \xs}^2}$, $s^k = \E{f(x^k) - \fs}$, $a = \mu$, $c = (\delta_n -1) D +  \frac{\delta\sigma^2}{n}$.

We proceed with lemmas that establish a convergence guarantee for every recursion of type \eqref{eq:sequence}.

\begin{lemma}
\label{lem:first}
Let  $\{r^k\}_{k\geq 0}$, $\{s^k\}_{k\geq 0}$ be as in~\eqref{eq:sequence} for $a > 0$ and for constant stepsizes $\eta^k \equiv \eta \eqdef \frac{1}{d}$, $\forall k \geq 0$. Then it holds for all $T \geq 0$:
\begin{align*}
 r^{T} \leq r^0 \exp\left[- \frac{a T}{d} \right] + \frac{c}{ad}\,.
\end{align*}
\end{lemma}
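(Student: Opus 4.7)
\textbf{Proof plan for Lemma~\ref{lem:first}.} The plan is to reduce the two-sequence recursion to a purely $r^k$-recursion by exploiting the nonnegativity of $s^k$, and then unroll the resulting geometric recursion in the standard way.

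First I would observe that $s^k = \E{f(x^k) - f^\star} \geq 0$ because $x^\star$ is a minimizer of $f$, so the term $-\eta s^k$ in \eqref{eq:sequence} can be dropped to yield the simpler one-sided recursion
\[
r^{k+1} \;\leq\; (1 - a\eta)\, r^k + \eta^2 c.
\]
For the chosen stepsize $\eta = 1/d$, I would note that a valid application of Lemma~\ref{lem:first} requires $d \geq a$ (which is consistent with the stepsize condition $\eta^k \leq 1/(2\delta_n L)$ imposed in Theorem~\ref{thm:u_n} together with $\mu \leq L$), so that $1 - a\eta \in [0,1)$ and the recursion is contractive.

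Next I would unroll this contractive recursion for $T$ steps. A straightforward induction (or direct telescoping) gives
\[
r^T \;\leq\; (1-a\eta)^T r^0 \;+\; \eta^2 c \sum_{k=0}^{T-1} (1-a\eta)^k \;\leq\; (1-a\eta)^T r^0 \;+\; \frac{\eta^2 c}{a\eta} \;=\; (1-a\eta)^T r^0 \;+\; \frac{\eta c}{a},
\]
where the second inequality upper-bounds the finite geometric sum by the infinite one $\sum_{k=0}^{\infty}(1-a\eta)^k = 1/(a\eta)$.

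Finally I would apply the elementary inequality $1 - x \leq e^{-x}$ with $x = a\eta$ to get $(1-a\eta)^T \leq \exp(-a\eta T) = \exp(-aT/d)$, and substitute $\eta = 1/d$ in the residual term to obtain $\eta c / a = c/(ad)$. Combining these two estimates delivers the claimed bound
\[
r^T \;\leq\; r^0 \exp\!\left[-\frac{aT}{d}\right] + \frac{c}{ad}.
\]
The proof is essentially mechanical; the only step that demands any care is the initial dropping of $s^k$, since this is what forces us to lose the functional-value information in this lemma (it will presumably be recovered in a companion lemma that uses weighted averaging with weights $w^k$, as hinted by the appearance of $w^k$ and $\bar x^T$ in Theorem~\ref{thm:u_n}). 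No genuine obstacle arises here.
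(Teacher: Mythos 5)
Your proof is correct and follows essentially the same route as the paper's: drop the nonnegative $s^k$ term, unroll the contractive recursion, bound the finite geometric sum by $\nicefrac{1}{a\eta}$, and pass from $(1-a\eta)^T$ to the exponential via $1-x\leq e^{-x}$. The only difference is cosmetic—you make explicit the condition $d\geq a$ ensuring contractivity and the final exponential estimate, both of which the paper leaves implicit.
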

\begin{proof}
This follows by relaxing~\eqref{eq:sequence} using $\E{f(x^k) - \fs} \geq 0 $,and unrolling the recursion
\begin{align}
 r^{T} &\leq (1-a\eta)r^{T-1} + c \gamma^2 \leq (1-a\eta)^T r^0 +  c \eta^2 \sum_{k=0}^{T-1} (1-a\eta)^k \leq (1-a\eta)^T r^0 + \frac{c  \eta}{a}\,.
\end{align}
\end{proof}

\begin{lemma}
\label{lem:lacoste}
Let  $\{r^k\}_{k\geq 0}$, $\{s^k\}_{k\geq 0}$ as in~\eqref{eq:sequence} for $a > 0$ and for decreasing stepsizes $\eta^k \eqdef \frac{2}{ a(\kappa + k)}$, $\forall k \geq 0$, with parameter $\kappa \eqdef \frac{2d}{a}$, and weights $w^k \eqdef (\kappa + k)$. Then
\begin{align*}
 \frac{1}{W^T} \sum_{k=0}^{T}s^k w^k + a r^{T+1} \leq \frac{2 a \kappa^2 r_0}{T^2} +  \frac{2c}{aT} \,,
\end{align*}
where $W^T \eqdef \sum_{k=0}^T w^k$.
\end{lemma}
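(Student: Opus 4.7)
The plan is to use the standard weighted-telescoping trick for recursions of type \eqref{eq:sequence} with time-varying stepsizes, in the spirit of Lacoste-Julien--Schmidt--Bach and \citet{stich2019unified}. I would first rearrange \eqref{eq:sequence} as $\eta^k s^k + r^{k+1} \leq (1 - a\eta^k) r^k + (\eta^k)^2 c$, and then multiply both sides by the positive weight $w^k/\eta^k$ so that the $\eta^k s^k$ term picks up exactly the desired factor $w^k$. With the proposed choices $\eta^k = 2/(a(\kappa+k))$ and $w^k = \kappa+k$, one computes $w^k/\eta^k = a(\kappa+k)^2/2$, while, crucially, $w^k \eta^k = 2/a$ is independent of $k$; this is what keeps the noise contribution linear rather than quadratic in $T$, and is the reason for this particular weighting.

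The key algebraic step is to verify the telescoping inequality: the coefficient of $r^k$ on the right-hand side after scaling, namely $\tfrac{w^k (1-a\eta^k)}{\eta^k} = \tfrac{a(\kappa+k)(\kappa+k-2)}{2}$, should be at most the coefficient of $r^k$ that arose from the $r^{(k-1)+1}$ term of the previous scaled inequality, namely $\tfrac{w^{k-1}}{\eta^{k-1}} = \tfrac{a(\kappa+k-1)^2}{2}$. This reduces to the perfect-square identity $(\kappa+k)(\kappa+k-2) = (\kappa+k-1)^2 - 1 \leq (\kappa+k-1)^2$, so summing the scaled recursion over $k = 0, \dots, T$ collapses all intermediate $r^k$ contributions and leaves just the boundary terms $\tfrac{a\kappa(\kappa-2)}{2} r^0 - \tfrac{a(\kappa+T)^2}{2} r^{T+1}$. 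On the noise side, the constancy of $w^k \eta^k$ yields $c\sum_{k=0}^T w^k \eta^k = 2c(T+1)/a$.

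Putting the pieces together and using $\kappa(\kappa-2) \leq \kappa^2$, I arrive at
$$\sum_{k=0}^T w^k s^k + \tfrac{a(\kappa+T)^2}{2} r^{T+1} \leq \tfrac{a\kappa^2}{2} r^0 + \tfrac{2c(T+1)}{a}.$$
Dividing by $W^T = \sum_{k=0}^T (\kappa+k) = (T+1)(T+2\kappa)/2$ and invoking the crude bounds $W^T \geq T^2/4$ and $(T+1)/W^T \leq 4/T$ then yields $\tfrac{2a\kappa^2 r^0}{T^2} + \tfrac{2c}{aT}$ on the right, while $\tfrac{a(\kappa+T)^2}{2W^T} \geq a$ follows from $(\kappa+T)^2 \geq (T+1)(T+2\kappa)$ once $\kappa$ is at least mildly larger than $\sqrt{T}$ (guaranteed by $\kappa = 2d/a$ in the regime of interest); otherwise one simply drops the nonnegative $r^{T+1}$ contribution, which is enough to conclude.

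The main obstacle is none of the individual steps, which are mechanical once one commits to the right weighting, but rather the upfront design choice of $w^k$ and $\eta^k$: one needs $w^k \eta^k$ constant (to control the noise term), $w^k / \eta^k$ to satisfy the telescoping inequality against its shift (to kill the intermediate $r^k$'s), and the leading coefficient $w^T/\eta^T$ to dominate $a W^T$ up to constants (to preserve the $a r^{T+1}$ term on the left). All three compatibility conditions pin down the scaling $\eta^k \propto 1/(\kappa+k)$ with $w^k = \kappa+k$ and $\kappa = 2d/a$, and this is the nontrivial observation that drives the whole argument.
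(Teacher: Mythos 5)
Your argument follows the paper's proof essentially step for step: the same rescaling of \eqref{eq:sequence} by $w^k/\eta^k$, the same observation that $w^k\eta^k=2/a$ is constant, the same perfect-square identity $(\kappa+k)(\kappa+k-2)=(\kappa+k-1)^2-1\leq(\kappa+k-1)^2$ to telescope, and the same boundary bookkeeping. Up to the telescoped inequality $\sum_{k=0}^T w^k s^k + \tfrac{a(\kappa+T)^2}{2}r^{T+1}\leq\tfrac{a\kappa^2}{2}r^0+\tfrac{2c(T+1)}{a}$ your derivation is correct --- indeed more careful about factors of $2$ than the paper's own writeup, which silently drops a $\tfrac{1}{2}$ on the $r$-coefficients and a $2$ on the noise term.

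The one step that does not go through as written is the recovery of the $a\,r^{T+1}$ term. Your primary route needs $(\kappa+T)^2\geq 2W^T=(T+1)(T+2\kappa)$, which simplifies to $\kappa^2\geq T+2\kappa$; this fails for every $T>\kappa^2-2\kappa$, and that is exactly the regime in which the lemma is invoked inside Lemma~\ref{lem:sequence} (there $T$ is only bounded \emph{below}, by $d/a=\kappa/2$). Your fallback --- ``drop the nonnegative $r^{T+1}$ contribution'' --- does not rescue the statement, because $a\,r^{T+1}$ sits on the \emph{left-hand side} of the claimed inequality: discarding it proves only the weaker bound on $\frac{1}{W^T}\sum_k s^k w^k$, and it is precisely this term that later produces the $\mu\E{\norm{x^T-\xs}^2}$ part of Theorem~\ref{thm:u_n}. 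The fix is the paper's: for $\kappa\geq 1$ one has $W^T=\tfrac{(T+1)(T+2\kappa)}{2}\leq(\kappa+T)(T+1)\leq(\kappa+T)^2$ for \emph{every} $T$, which keeps $\tfrac{a}{2}r^{T+1}$ on the left. (That this yields $a/2$ rather than $a$, and that the noise term honestly comes out as $4c/(aT)$ rather than the stated $2c/(aT)$, are constant-factor discrepancies already present in the paper's own computation and absorbed by the loose constants $64$ and $36$ downstream; they are not attributable to you.)
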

\begin{proof}
We start by re-arranging \eqref{eq:sequence} and multiplying both sides with $w^k$
\begin{align*}
 s^k w^k &\leq \frac{w^k(1-a\eta^k)r^k}{\eta^k} - \frac{w^k r^{k+1}}{\eta^k} + c \eta^k w^k \\
&= a (\kappa + k) (\kappa + k-2) r^k -  a (\kappa + k)^2 r^{k+1} + \frac{c}{a} \\
&\leq  a(\kappa + k-1)^2 r^k -  a (\kappa + k)^2 r^{k+1} + \frac{c}{a} \,,
\end{align*}
where the equality follows from the definition of $\eta^k$ and $w^k$ and the inequality from $(\kappa + k)(\kappa + k-2) = (\kappa + k-1)^2 - 1 \leq  (\kappa + k-1)^2$. Again we have a telescoping sum:
\begin{align*}
 \frac{1}{W^T} \sum_{k=0}^T s^k w^k + \frac{ a(\kappa + T)^2 r^{T+1}}{W^T} \leq \frac{ a \kappa^2 r^0}{W^T} + \frac{c (T+1)}{a W^T}\,,
\end{align*}
with
\begin{itemize}
\item $W^T = \sum_{k=0}^T w^k = \sum_{k=0}^T (\kappa + k) = \frac{(2\kappa+T) (T+1)}{2} \geq \frac{T(T+1)}{2} \geq \frac{T^2}{2}$,
\item and $W^T =  \frac{(2\kappa+T) (T+1)}{2} \leq \frac{2(\kappa + T)(1+T)}{2} \leq (\kappa + T)^2$ for $\kappa = \frac{2d}{a} \geq 1$.
\end{itemize}
By applying these two estimates we conclude the proof.
\end{proof}

The convergence can be obtained as the combination of these two lemmas.

\begin{lemma}
\label{lem:sequence}
Let  $\{r^k\}_{k\geq 0}$, $\{s^k\}_{k\geq 0}$ as in~\eqref{eq:sequence}, $a > 0$. Then there exists stepsizes $\eta^k \leq \frac{1}{ d}$ and weighs $w^k \geq 0$, $W^T \eqdef \sum_{k=0}^T w^k$, such that
\begin{align*}
 \frac{1}{W^T} \sum_{k=0}^{T}s^k w^k + a r^{T+1} \leq   32 d r_0 \exp \left[-\frac{a T}{2 d} \right] + \frac{36c}{a T}\,.
\end{align*}
\end{lemma}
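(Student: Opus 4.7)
The proof combines Lemma~\ref{lem:first} and Lemma~\ref{lem:lacoste} via a standard two-stage stepsize schedule. The motivation is that Lemma~\ref{lem:first} with constant stepsize $\eta = 1/d$ gives exponential contraction of $r^T$ but leaves a non-vanishing residual $c/(ad)$ and does not control the weighted sum of the $s^k$, while Lemma~\ref{lem:lacoste} with the decreasing schedule $\eta^k = 2/(a(\kappa + k))$ controls both quantities and gives the optimal $c/(aT)$ residual, but its dependence on the initial iterate is only $d^2 r_0/(aT^2)$ rather than exponential. The plan is to stitch the two so that the final bound inherits the exponential decay from the first and the $1/T$ decay from the second, by splitting into two regimes according to the size of $T$ relative to $d/a$.

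\textbf{Small $T$.} If $T \leq 4d/a$, then $\exp(-aT/(2d)) \geq e^{-2}$ is bounded below by an absolute constant, so the target term $32 d r_0 \exp(-aT/(2d))$ already dominates a quantity of order $d r_0$. I would take $\eta^k \equiv 1/d$ with uniform weights $w^k \equiv 1$. The bound $a r^{T+1} \leq a r_0 \exp(-a(T+1)/d) + c/(ad)$ comes directly from Lemma~\ref{lem:first}, while the bound on $\frac{1}{W^T}\sum_k s^k w^k$ follows by rearranging the recursion~\eqref{eq:sequence} as $\eta^k s^k \leq (1 - a\eta^k) r^k - r^{k+1} + (\eta^k)^2 c$, summing from $k=0$ to $T$, and exploiting the telescoping structure together with $r^{T+1} \geq 0$.

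\textbf{Large $T$.} If $T > 4d/a$, I would use a switching schedule: set $t^\star = \lceil T/2 \rceil$ and take $\eta^k = 1/d$ with $w^k = 0$ for $0 \leq k \leq t^\star$, and $\eta^k = 2/(a(\kappa + k - t^\star - 1))$ with $\kappa = 2d/a$ and $w^k = \kappa + (k - t^\star - 1)$ for $t^\star < k \leq T$. On the initial segment Lemma~\ref{lem:first} yields $r^{t^\star+1} \leq r_0 \exp(-a t^\star/d) + c/(ad) \leq r_0 \exp(-aT/(2d)) + c/(ad)$, which serves as the ``initial condition'' for the tail. Applying Lemma~\ref{lem:lacoste} on the tail, of length $T - t^\star \geq T/2$, produces a bound of the form $\frac{8 d^2 r^{t^\star+1}}{a(T - t^\star)^2} + \frac{2c}{a(T - t^\star)}$; substituting the exponential bound on $r^{t^\star+1}$, using $T - t^\star \geq T/2 \geq 2d/a$, and bounding each resulting piece separately gives the claimed $32 d r_0 \exp(-aT/(2d)) + 36 c/(aT)$.

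The main obstacle is purely bookkeeping: tracking the numerical constants so as to match the stated $32$ and $36$, rather than introducing any new conceptual idea. The two points to verify are (i) that $\eta^k \leq 1/d$ throughout, which is immediate from $\kappa = 2d/a$ and the monotonicity of $k \mapsto 2/(a(\kappa + k - t^\star - 1))$, and (ii) that the cross term $d \cdot c/(a^2 T^2)$ arising from substituting the $c/(ad)$ part of the initial-condition bound into the tail estimate is absorbed into $c/(aT)$ under the regime assumption $T \geq 4d/a$.
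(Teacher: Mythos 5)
Your proposal is correct and follows essentially the same route as the paper's proof: the same two-regime split with a restart at $t_0=\lceil T/2\rceil$, constant stepsize $1/d$ on the first half fed into Lemma~\ref{lem:first}, the decreasing schedule with $\kappa=2d/a$ fed into Lemma~\ref{lem:lacoste} on the second half, and the same absorption of the $c/(ad)$ residual using $T\gtrsim d/a$. The only (harmless) deviations are bookkeeping: you split at $T\le 4d/a$ rather than $T\le d/a$, and in the small-$T$ regime you use uniform weights with a relaxed telescoping sum, whereas the paper uses the geometric weights $w^k=(1-a/d)^{-(k+1)}$ to telescope exactly.
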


\begin{proof}[Proof of Lemma~\ref{lem:sequence}]
For integer $T \geq 0$, we choose stepsizes and weights as follows
\begin{align*}
&\text{if $T \leq \frac{d}{a}$}\,, & \eta^k &= \frac{1}{ d}\,, & w^k &= (1-a\eta^k)^{-(k+1)} = \left(1-\frac{a}{d}\right)^{-(k+1)}, \\
&\text{if $T > \frac{d}{a}$ and $k < t_0$}, & \eta^k &= \frac{1}{ d}\,, & w^k &= 0\,, \\
& \text{ if $T > \frac{d}{a}$ and $k \geq t_0$}, & \eta^k &= \frac{2}{ a(\kappa + k-t_0)}\,,   & w^k &= (\kappa + k-t_0)^2\,,
\end{align*}
for $\kappa = \frac{2d}{a}$ and $t_0 =  \bigl\lceil \frac{T}{2} \bigr\rceil$. We will now show that these choices imply the claimed result.

\noindent We start with the case $T \leq \frac{d}{a}$. For this case, the choice $\eta = \frac{1}{ d}$ gives
\begin{align*}
\frac{1}{W^T} \sum_{k=0}^{T}s^k w^k + a r^{T+1} &\leq (1-a\eta)^{(T+1)} \frac{r_0}{\eta} + c \eta \\
&\leq \frac{r_0}{\eta} \exp \left[-a\eta (T+1)\right] +c \eta \\
&\leq  d r_0 \exp \left[-\frac{a T}{ d} \right] + \frac{c}{aT}\,.
\end{align*}
\noindent If $T > \frac{d}{a}$, then we obtain from Lemma~\ref{lem:first} that
\begin{align*}
 r^{t_0} \leq  r^0 \exp \left[-\frac{a T}{2 d} \right] + \frac{c}{ad}  \,.
\end{align*}
From Lemma~\ref{lem:lacoste} we have for the second half of the iterates:
\begin{align*}
\frac{1}{W^T} \sum_{k=0}^{T}s^k w^k + a r^{T+1} &= \frac{1}{W^T} \sum_{k=t_0}^T s^k w^k + a r^{T+1} \leq  \frac{8  a \kappa^2 r^{t_0}}{T^2} + \frac{4c}{aT} \,.
\end{align*}
Now we observe that the restart condition $r^{t_0}$ satisfies:
\begin{align*}
 \frac{ a \kappa^2 r^{t_0} }{T^2} = \frac{ a \kappa^2 r^0 \exp \left( - \frac{aT}{2d} \right)}{T^2} + \frac{\kappa^2 c}{d T^2} \leq 4  a r^0 \exp \left[ - \frac{aT}{2 d} \right] + \frac{4c}{aT}\,,
\end{align*}
because $T > \frac{d}{a}$. These conclude the proof.

\end{proof}

Having these general convergence lemmas for the recursion of the form \eqref{eq:sequence}, the proof of the theorem follows directly from Lemmas~\ref{lem:first} and \ref{lem:sequence} with $a=\mu$, $c=\sigma^2$, $d = 2\delta_n L$ . It is easy to check that condition $\eta^k \leq \frac{1}{ d} = \frac{1}{2 \delta_n L}$ is satisfied.

\subsection{Proof of Theorem~\ref{thm:biased_to_unbiased}}

We have to show that our new compression is unbiased and has bounded variance. We start with the first property with $\lambda = 1$.
\begin{align*}
\E{\cC_1(x) + \cC_2(x - \cC_1(x))} &= \EE{\cC_1}{\EE{\cC_2}{\cC_1(x) + \cC_2(x - \cC_1(x))| \cC_1(x)} } \\
 &=  \EE{\cC_1}{\cC_1(x)  + x - \cC_1(x)} = x,
\end{align*}
where the first equality follows from tower property and the second from unbiasedness of $\cC_2$. For the second property, we also use tower property
\begin{align*}
\E{\norm{\cC_1(x) -x + \cC_2(x - \cC_1(x))}^2} &= \EE{\cC_1}{\EE{\cC_2}{\norm{\cC_1(x) - x + \cC_2(x - \cC_1(x))}^2| \cC_1(x)} } \\
 & \leq (\delta_2 - 1) \EE{\cC_1}{\norm{\cC_1(x)  - x}^2}\\
 &  \leq  (\delta_2 - 1)\lp 1 - \frac{1}{\delta_1} \rp\norm{x}^2,
\end{align*}
where the first and second inequalities follow directly from \eqref{eq:omega_quant} and \eqref{eq:quant}.

\subsection{Proof of Lemma~\ref{lem:upperv}~\citep{horvath2018nonconvex}}

For the first part of the claim, it was shown that  $\mP-pp^\top$ is positive semidefinite~\citep{PCDM}, thus we can bound $\mP-pp^\top \preceq n \Diag{\mP-p p^\top} = \Diag{p\circ v}$, where $v_i = n(1-p_i)$, which implies that \eqref{eq:ESO} holds for this choice of $v$.

For the second part of the claim, let $1_{i\in \Sam} = 1$ if $i\in \Sam$ and $1_{i\in \Sam} = 0$ otherwise. Likewise, let $1_{i,j\in \Sam} = 1$ if $i,j\in \Sam$ and  $1_{i,j\in \Sam} = 0$ otherwise. Note that $\Exp{1_{i\in \Sam}} = p_i$ and $\Exp{1_{i,j\in \Sam}}=p_{ij}$. Next, let us compute the mean of $X\eqdef  \sum_{i\in \Sam}\frac{\zeta_i}{np_i}$:
\begin{equation}\label{eq:098h0hf09h}\Exp{X} = \Exp{ \sum_{i\in \Sam}\frac{\zeta_i}{np_i}} = \Exp{\sum_{i=1}^n \frac{\zeta_i}{np_i} 1_{i\in \Sam}} = \sum_{i=1}^n \frac{\zeta_i}{np_i} \Exp{1_{i\in \Sam}}   = \frac{1}{n} \sum_{i=1}^n \zeta_i = \bar{\zeta}.\end{equation}

Let $\mA=[a_1,\dots,a_n]\in \R^{d\times n}$, where $a_i=\frac{\zeta_i}{p_i}$, and let $e$ be the vector of all ones in $\R^n$. We now write the  variance of $X$ in a  form which will be convenient to establish a bound:
\begin{eqnarray}
\Exp{\left\|X - \Exp{X}\right\|^2} &=&  \Exp{\norm{X}^2 } - \|\Exp{X}\|^2\notag \\
 &= & \Exp { \left\|   \sum_{i \in \Sam} \frac{\zeta_i}{np_i} \right\|^2 } - \norm{\bar{\zeta}}^2  \notag \\
&= &\Exp{ \sum_{i,j} \frac{\zeta_i^\top}{np_i } \frac{\zeta_j}{np_j  } 1_{i,j \in \Sam}}  - \norm{\bar{\zeta}}^2 \notag \\
&= &\sum_{i , j}  p_{ij}\frac{\zeta_i^\top}{np_i} \frac{\zeta_j}{np_j} - \sum_{i,j} \frac{\zeta_i^\top}{n} \frac{\zeta_j}{n} \notag \\
&=&\frac{1}{n^2} \sum_{i,j} (p_{ij} - p_i p_j) a_i^\top a_j  \notag \\
&=& \frac{1}{n^2} e^\top \left(\left( \mP - p p^\top \right) \circ \mA^\top \mA\right)e.\label{eq:n98dhg8f}
\end{eqnarray}
Since by assumption we have $\mP-pp^\top\preceq \Diag{p\circ v}$, we can further bound
\[
e^\top \left(\left( \mP - p p^\top \right) \circ \mA^\top \mA\right) e  \leq  e^\top \left( \Diag{p\circ v} \circ \mA^\top \mA \right) e  = \sum_{i=1}^n p_i v_i \|a_i\|^2.
\]
To obtain \eqref{eq:key_inequality}, it remains to combine this with \eqref{eq:n98dhg8f}.

\subsection{Proof of Theorem~\ref{thm:u_n_p}}
Similarly to the proof of Theorem~\ref{thm:u_n}, we use the update of Algorithm~\ref{alg:UC_SGD} to bound the following quantity
\begin{eqnarray*}
\E{\norm{x^{k+1} - \xs}^2 |x^k} &=& \norm{x^k - \xs}^2 - \eta^k \sum_{i=1}^n\E{\dotprod{\sum_{i \in S^k} \frac{1}{np_i}\cC^k(g_i^k), x^k - \xs }|x^k} + \\
&& \quad  \E{\norm{\sum_{i \in S^k}\frac{\eta^k}{np_i}\cC^k(g_i^k)}^2|x^k} \\
&\overset{\eqref{eq:omega_quant} + \eqref{eq:grad_unbiased}}{\leq}& \norm{x^k - \xs}^2 - \eta^k \dotprod{\nabla f(x^k), x^k - \xs } +  \\
&& (\eta^k)^2\lp\E{\norm{\sum_{i \in S^k}\frac{1}{np_i}\cC^k(g_i^k) - \frac{1}{n}\sum_{i=1}^n \cC^k(g_i^k)}^2|x^k} +  \E{\norm{\frac{1}{n}\sum_{i=1}^n \cC^k(g^k)}^2|x^k}\rp\\
&\overset{\eqref{eq:omega_quant} + \eqref{eq:grad_unbiased} + \eqref{eq:key_inequality}}{\leq}& \norm{x^k - \xs}^2 - \eta^k \dotprod{\nabla f(x^k), x^k - \xs } + \\
&&\quad \frac{(\eta^k)^2}{n^2}\E{\sum_{i=1}^n\lp\frac{\delta v_i}{p_i} + \delta - 1\rp\norm{g_i^k} + \norm{ \sum_{i=1}^n g_i^k }^2|x^k} \\
&\overset{\eqref{eq:quasi_convex} + \eqref{eq:L_smooth_f_i}+ \eqref{eq:L_smooth_f}}{\leq}& (1 - \mu \eta^k) \norm{x^k - \xs}^2 - 2\eta^k \lp 1 - \eta^k \delta_\Sam L   \rp (f(x^k) - \fs) + \\
&&\quad  (\eta^k)^2 \lp(\delta_\Sam - 1) D +  \lp 1 +a_\Sam\rp \frac{\delta\sigma^2}{n}\rp.
\end{eqnarray*}
Taking full expectation and $\eta^k \leq \frac{1}{2\delta_\Sam L}$, we obtain
\begin{eqnarray*}
\E{\norm{x^{k+1} - \xs}^2} \leq (1 - \mu \eta^k) \E{\norm{x^k - \xs}^2} - \eta^k  \E{f(x^k) - \fs} +  (\eta^k)^2 \lp(\delta_\Sam - 1) D +  \lp 1 +a_\Sam\rp \frac{\delta\sigma^2}{n}\rp.
\end{eqnarray*}
The rest of the analysis is identical to the proof of Theorem~\ref{thm:u_n} with only difference $c = (\delta_\Sam - 1) D +  \lp 1 + a_\Sam \rp \frac{\delta\sigma^2}{n}$.

\end{document}